\title{Generalization Bounds for Meta-Learning via PAC-Bayes and Uniform Stability}
\author{
  \hspace{0.1in}\textbf{Alec Farid \hspace{2em} Anirudha Majumdar} \\
  Department of Mechanical and Aerospace Engineering, Princeton University\\
  \texttt{\{afarid, ani.majumdar\}@princeton.edu}}
\begin{document}
\maketitle
\vspace{-7pt}
\begin{abstract}
We are motivated by the problem of providing strong generalization guarantees in the context of meta-learning. Existing generalization bounds are either challenging to evaluate or provide vacuous guarantees in even relatively simple settings. We derive a probably approximately correct (PAC) bound for gradient-based meta-learning using two different generalization frameworks in order to deal with the qualitatively different challenges of generalization at the ``base" and ``meta" levels. We employ bounds for uniformly stable algorithms at the base level and bounds from the PAC-Bayes framework at the meta level. The result of this approach is a novel PAC bound that is tighter when the base learner adapts quickly, which is precisely the goal of meta-learning. We show that our bound provides a tighter guarantee than other bounds on a toy non-convex problem on the unit sphere and a text-based classification example. We also present a practical regularization scheme motivated by the bound in settings where the bound is loose and demonstrate improved performance over baseline techniques.
\end{abstract}

\section{Introduction}
\label{introduction}

A major challenge with current machine learning systems is the need to acquire large amounts of training data in order to learn a new task. Over the past few decades, meta-learning \cite{Schmidhuber87, Thrun98} has emerged as a promising avenue for addressing this challenge. Meta-learning relies on the intuition that a new task often bears significant similarity to previous tasks; hence, a learner can learn to perform a new task very quickly by exploiting data from previously-encountered related tasks. The meta-learning problem formulation thus assumes access to datasets from a variety of tasks during meta-training. The goal of the meta learner is then to learn inductive biases from these tasks in order to train a base learner to achieve few-shot generalization on a new task. 

Over the past few years, there has been tremendous progress in practical algorithms for meta-learning (see, e.g., \cite{Santoro16, Ravi17, Finn17, Hospedales20}). Techniques such as model-agnostic meta-learning (MAML) \cite{Finn17} have demonstrated the ability to perform few-shot learning in a variety of supervised learning and reinforcement learning domains. However, our theoretical understanding of these techniques lags significantly behind successes on the empirical front. In particular, the problem of deriving \emph{generalization bounds} for meta-learning techniques remains an outstanding challenge. Current methods for obtaining generalization guarantees for meta-learning \cite{Amit18, Khodak19, Yin20} either (i) produce bounds that are extremely challenging to compute or (ii) produce vacuous or near-vacuous bounds in even highly simplified settings (see Section \ref{sec:examples} for numerical examples). Indeed, we note that existing work on generalization theory for meta-learning techniques do not explicitly report numerical values for generalization bounds. This is in contrast to the state of generalization theory in the supervised learning setting, where recent techniques demonstrate the ability to obtain non-vacuous generalization guarantees on benchmark problems (e.g. visual classification problems \cite{Dziugiate17, Zhou19, Perez-Ortiz20}).

The generalization challenge in meta-learning is similar to, but distinct from, the supervised learning case. In particular, any generalization bound for meta-learning must account for \emph{two levels} of generalization. First, one must account for generalization at the base level, i.e., the ability of the base learner to perform well on new data from a given task. This is particularly important in the few-shot learning setting. Second, one must account for generalization at the meta level, i.e., the ability of the meta learner to generalize to new tasks not encountered during meta-training. Moreover, the generalization performance at the two levels is coupled since the meta learner is responsible for learning inductive biases that the base learner can exploit for future tasks.

The key technical insight of this work is to bound the generalization error at the two levels (base and meta) using two \emph{different} generalization theory frameworks that each are particularly well-suited for addressing the specific challenges of generalization. At the base level, we utilize the fact that a learning algorithm that exhibits uniform stability \cite{Bousquet02, Bousquet20} also generalizes well in expectation (see Section \ref{sec:two_bounds} for a formal statement). Intuitively, uniform stability quantifies the sensitivity of the output of a learning algorithm to changes in the training dataset. As demonstrated by \cite{Hardt16}, limiting the number of training epochs of a gradient-based learning algorithm leads to uniform stability. In other words, a gradient-based algorithm that \emph{learns quickly} is stable. Since the goal of meta-learning is \emph{precisely} to train the base learner to learn quickly, we posit that generalization bounds based on stability are particularly well-suited to bounding the generalization error at the base level. At the meta level, we employ a generalization bound based on \emph{Probably Approximately Correct (PAC)-Bayes} theory. Originally developed two decades ago \cite{McAllester99, Langford03}, there has been a recent resurgence of interest in PAC-Bayes due to its ability to provide strong generalization guarantees for neural networks \cite{Dziugiate17, Arora18, Perez-Ortiz20}. Intuitively, the challenge of generalization at the meta level (i.e., generalizing to new tasks) is similar to the challenge of generalizing to new data in the standard supervised learning setting. In both cases, one must prevent over-fitting to the particular tasks/data that have been seen during meta-training/training. Thus, the strong empirical performance of PAC-Bayes theory in supervised learning problems makes it a promising candidate for bounding the generalization error at the meta level.

\textbf{Contributions.} The primary contributions of this work are the following. First, we leverage the insights above in order to develop a novel generalization bound for gradient-based meta-learning using uniform stability and PAC-Bayes theory (Theorem \ref{thm:meta learning bound}). Second, we develop a regularization scheme for MAML \cite{Finn17} that explicitly minimizes the derived bound (Algorithm \ref{alg:meta-learning with bound}). We refer to the resulting approach as \emph{PAC-BUS} since it combines PAC-Bayes and Uniform Stability to derive generalization guarantees for meta-learning. Third, we demonstrate our approach on two meta-learning problems: (i) a toy non-convex classification problem on the unit-ball (Section \ref{example_circleclass}), and (ii) the \emph{Mini-Wiki} benchmark introduced in \cite{Khodak19} (Section \ref{example_miniwiki}). Even in these relatively small-scale settings, we demonstrate that recently-developed generalization frameworks for meta-learning provide either near-vacuous or loose bounds, while PAC-BUS provides significantly stronger bounds. Fourth, we demonstrate our approach in larger-scale settings where it remains challenging to obtain non-vacuous bounds (for our approach as well as others). Here, we propose a practical regularization scheme which re-weights the terms in the rigorously-derived PAC-BUS upper bound (\emph{PAC-BUS(H)}; Algorithm \ref{alg:meta-learning with bound (H)} in the appendix). Recent work \cite{Yin20} introduces a challenging variant of the \emph{Omniglot} benchmark \cite{Lake11} which highlights and tackles challenges with \emph{memorization} in meta-learning. We show that PAC-BUS(H) is able to prevent memorization on this variant (Section \ref{example_omniglot}).
\vspace{-7pt}
\section{Problem formulation}
\label{problem_formulation}
\vspace{-5pt}

\textbf{Samples, tasks, and datasets.} Formally, consider the setting where we have an unknown meta distribution $\T$ over tasks (roughly, ``tasks" correspond to different, but potentially related, learning problems). A sampled task $t \sim \T$ induces an (unknown) distribution $\Zt$ over sample space $\Z$. We assume that all sampling is independent and identically distributed (i.i.d.). Note that the sample space $\Z$ is shared between tasks, but the distribution $\Zt$ may be different. We then sample within-task samples $z \sim \Zt$ and within-task datasets $S = \{z_1, z_2, \dots, z_m\} \sim \Zt^m$. We assume that each sample $z$ has a single corresponding label $\oracle(z)$, where the function $\oracle$ is an oracle which outputs the correct label of $z$. At meta-training time, we assume access to $l$ datasets, which we call $\textbf{S} = \{S_1, S_2, \dots, S_l\}$. Each dataset $S_i$ in $\textbf{S}$ is drawn by first selecting a task $t_i$ from $\T$, and then drawing $S_i \sim \Zti^m$.

\textbf{Hypotheses and losses.} Let $h$ denote a hypothesis and $L(h, z)$ be the loss incurred by hypothesis $h$ on sample $z$. The loss is computed by comparing $h(z)$ with the true label $\oracle(z)$. For simplicity, we assume that there is no noise on the labels; we can thus assume that all loss functions have access to the label oracle function $\oracle$ and thus the loss depends only on hypothesis $h$ and sample $z$. We note that this assumption is not required for our analysis and is made for the ease of exposition. Overloading the notation, we let $L(h, \Zt) := \EE_{z\sim \Zt}L(h,z)$ and $\widehat{L}(h, S) := \frac{1}{|S|}\sum_{i=1}^{|S|}L(h, z_i)$. 

\textbf{Meta-learning.} As with model-agnostic meta-learning (MAML) \cite{Finn17}, we let meta parameters $\theta \in \thetaspace$ correspond to an initialization of the base learner's hypothesis. Let $h_\theta$ be the $\theta$-initialized hypothesis. Generally, the initialization $\theta$ is learned from the multiple datasets we have access to at meta-training time. In this work, we will learn a \emph{distribution} $\Thetaq$ over initializations so that we can use bounds from the PAC-Bayes framework. At test time, a new task $t \sim \T$ is sampled and we are provided with a new dataset $S \sim \Zt^m$. The base learner uses an algorithm $A$ (e.g., gradient descent), the dataset $S$, and the initialization $\theta \sim \Thetaq$ in order to fine-tune the hypothesis and perform well on future samples drawn from $\Zt$. We denote the base learner's updated hypothesis by $h_\thp$. More formally, our goal is to learn a distribution $\Thetaq$ with the following objective:
\begin{equation}
\label{eq:meta-learning objective}
\min_{\Thetaq} \ \LTT:= \min_{\Thetaq} \ \EEt \ \EESZ \ \EEtheta \ L(h_{\thp},\Zt).
\end{equation}
We are particularly interested in the the few-shot learning case, where the number of samples which the base learner can use to adapt is small. A common technique to improve test performance in the few-shot learning case is to allow for validation data at meta-training time. Thus, in addition to a generalization guarantee on meta-learning without validation data, we will derive a bound when allowing for the use of validation data $S_\va \sim \Zt^n$ during meta-training.

\vspace{-7pt}
\section{Related work}
\label{related_work}
\vspace{-5pt}

\textbf{Meta-learning.} Meta-learning is a well-studied technique for exploiting similarities between learning tasks \cite{Schmidhuber87, Thrun98}. Often used to reduce the need for large amounts of training data, a number of approaches for meta-learning have been explored over decades \cite{Bengio92, Bottou92, Caruana97, Heskes98, Vilalta02, Santoro16, Ravi17, Hospedales20}. Recently, methods based on model-agnostic meta-learning (MAML) \cite{Finn17} have demonstrated strong performance across different application domains and benchmarks such as \textit{Omniglot} \cite{Lake11} and \textit{Mini-ImageNet} \cite{Vinyals16}. These methods operate by optimizing a set of initial parameters that can be quickly fine-tuned via gradient descent on a new task. The approaches mentioned above typically do not provide any generalization guarantees, and none of them compute explicit numerical bounds on generalization performance. Our approach has the structure of gradient-based meta-learning while providing guarantees on generalization.

\textbf{Generalization bounds for supervised learning.} Multiple frameworks have been developed for providing generalization guarantees in the classical supervised learning setting. Early breakthroughs include Vapnik-Chervonenkis (VC) theory \cite{Vapnik68, Anthony99}, Rademacher complexity \cite{Shalev14}, and the minimum description length principle \cite{Blumer87, Rissanen89, Langford05}. More recent frameworks include algorithmic stability bounds \cite{Bousquet02, Celisse16, Hardt16, Rivasplata18, Abou-Moustafa19} and PAC-Bayes theory \cite{Shawe-Taylor97, McAllester99, Seeger02}. The connection between stability and learnability has been established in \cite{Shalev10, Villa13, Hardt16}, and suggests that algorithmic stability bounds are a strong choice of generalization framework. PAC-Bayes theory in particular provides some of the tightest known generalization bounds for classical supervised learning approaches such as support vector machines \cite{Seeger02, Langford03, Germain09, Rivasplata18, Ambroladze06, Parrado-Hernandez07}. Since its development, researchers have continued to tighten \cite{Langford03, McAllester13, Perez-Ortiz20} and generalize the framework \cite{Catoni04, Catoni07, Rivasplata20}. Exciting recent results \cite{Dziugiate17, Neyshabur17, Neyshabur17a, Bartlett17, Arora18, Perez-Ortiz20} have demonstrated the promise of PAC-Bayes to provide strong generalization bounds for neural networks on supervised learning problems (see \cite{Jiang20} for a recent review of generalization bounds for neural networks). It is also possible to combine frameworks such as PAC-Bayes and uniform stability to derive bounds for supervised learning \cite{London17}. We will use these two frameworks to bound generalization in the two levels of meta-learning. In contrast to the standard supervised learning setting, generalization bounds for meta-learning are less common and remain loose. 

\textbf{Generalization bounds for meta-learning.} As described in Section \ref{introduction}, meta-learning bounds must account for two ``levels" of generalization (base level and meta level). The approach presented in \cite{Maurer05} utilizes algorithmic stability bounds at both levels. However, this requires both meta and base learners to be uniformly stable. This is a strong requirement that is challenging to ensure at the meta level. Another recent method, known as follow-the-meta-regularized-leader (FMRL) \cite{Khodak19}, provides guarantees for a regularized meta-learning version of the follow-the-leader (FTL) method for online learning, see e.g. \cite{Hazan16}. The generalization bounds provided are derived from the application of online-to-batch techniques \cite{Alquier17, Denevi19}. A regret bound for meta-learning using an aggregation technique at the meta-level and an algorithm with a uniform generalization bound at the base level is provided in \cite{Alquier17}. The techniques mentioned do not present an algorithm which makes use of validation data (in contrast to our approach). Using validation data (i.e., held-out data) is a common technique for improving performance in meta-learning and is particularly important for the few-shot learning case.

Another method for deriving a generalization bound on meta-learning is to use PAC-Bayes bounds at both the base and meta levels \cite{Pentina14, Pentina15}. In \cite{Amit18}, generalization bounds based on such a framework are provided along with practical optimization techniques. However, the method requires one to maintain distributions over distributions of initializations, which can result in large computation times during training and makes it extremely challenging to numerically compute the bound. Moreover, the approach also does not allow one to incorporate validation data to improve the bound. Recent work has made progress on some of these challenges. In \cite{Rothfuss20}, the computational efficiency of training is improved but the challenges associated with numerically computing the generalization bound or incorporating validation data are not addressed. State-of-the-art work tightens the two-level PAC-Bayes guarantee, addresses computation times for training and evaluation of the bound, and allows for validation data \cite{Yin20}. However, all of the two-level PAC-Bayes bounds require a separate PAC-Bayes bound for each task, and thus a potentially loose union bound.

We present a framework which, to our knowledge, is the first to combine algorithmic stability and PAC-Bayes bounds (at the base- and meta- levels respectively) in order to derive a meta-learning algorithm with associated generalization guarantees. As outlined in Section \ref{introduction}, we believe that the algorithmic stability and PAC-Bayes frameworks are particularly well-suited to tackling the specific challenges of generalization at the different levels. We also highlight that \emph{none} of the approaches mentioned above report numerical values for generalization bounds, even for relatively simple problems. Here, we empirically demonstrate that prior approaches tend to provide either near-vacuous or loose bounds even in relatively small-scale settings while our proposed method provides significantly stronger bounds.

\vspace{-7pt}
\section{Generalization bound on meta-learning}
\label{generalization_bound_for_meta-learning}
\vspace{-5pt}

We use two different frameworks for the two levels of generalization required in a meta-learning bound. We utilize the PAC-Bayes framework to bound the expected training loss on future tasks, and uniform stability bounds to argue that if we have a low training loss when using a uniformly stable algorithm, then we achieve a low test loss. The following section will introduce these frameworks independently. We then present the overall meta-learning bound and associated algorithm to find a distribution over initialization parameters (i.e., meta parameters) that minimizes the upper bound.

\subsection{Preliminaries: two generalization frameworks}
\label{sec:two_bounds}
\subsubsection{Uniform stability}
Let $S = \{z_1, z_2, \dots, z_m\} \in \Z^m$ be a set of $m$ elements of $\Z$. Let $S^i = \{z_1, \dots, z_{i-1}, z_i', z_{i+1},$ $\dots, z_m\}$ be identical to dataset $S$ except that the $i^{th}$ sample $z_i$ is replaced by some $z_i' \in \Z$. Note that our analysis can be extended to allow for losses bounded by some finite $M$, but we work with losses bounded within $[0,1]$ for the sake of simplicity. With these precursors, we define an analogous notion of \emph{uniform stability} to \cite[Definition 2.1]{Hardt16} for deterministic algorithms $A$ and distributions $\Thetaq$ over initializations.\footnote{We use deterministic algorithms to avoid excess computation when calculating the provided meta-learning upper bounds. See Appendix \ref{ap:beta_bounds} for further details.}

\begin{definition}[Uniform Stability] A deterministic algorithm A has $\beta > 0$ uniform stability with respect to loss $L$ if $\ \forall \ z \in \Z, \ \forall \ S \in \Z^m, \ \forall \ i\in\{1, \dots, m\},$ and all distributions $\Thetaq$ over initializations, the following holds: \label{thm:bousquet}
\begin{align}
	\EEtheta \ |L(h_{A(\theta, S)},z) - L(h_{A(\theta, S^i)},z)| \leq \beta.
\end{align}
We define $\b$ as the minimal such $\beta$. 
\end{definition}
In this work, we will bound $\b$ as a function of the algorithm, form of the loss, and number of samples that the algorithm uses (See Appendix \ref{ap:beta_bounds} for further details on the bounds on $\b$ for our setup). We then establish a relationship between uniform stability and generalization in expectation. The following is adapted from \cite[Theorem 2.2]{Hardt16} for the notion of uniform stability presented in Definition \ref{thm:bousquet}:
\begin{theorem}[Algorithmic Stability Generalization in Expectation] Fix a task $t \sim \T$. The following inequality holds for hypothesis $h_\thp$ learned using $\b$ uniformly stable algorithm $A$ with respect to loss $L$:
\label{thm:alg stab gen}
\begin{align}
\label{maurerineq_pre}
    \EESZ \ \EEtheta \ L\hZ \leq \EESZ \ \EEtheta \ \widehat{L}\hS + \b.
\end{align}
\end{theorem}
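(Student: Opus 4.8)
The plan is to bound the generalization gap $\EESZ \, \EEtheta \big[ L(h_{A(\theta,S)},\Zt) - \widehat{L}(h_{A(\theta,S)},S) \big]$ by $\b$ through a symmetrization (``ghost sample'') argument of the Bousquet--Elisseeff / Hardt et al.\ type. Since the task $t$ is fixed, $\Zt$ is a fixed distribution and the only randomness is $S \sim \Zt^m$, the independent initialization $\theta \sim \Thetaq$, and fresh draws from $\Zt$. First I would rewrite the population loss using a fresh sample: because $L(h,\Zt) = \EE_{z\sim\Zt} L(h,z)$, for each index $i$ I would introduce an independent ghost copy $z_i' \sim \Zt$ and observe that the population term is trivially the average over $i$ of the identical quantities $\EESZ \, \EE_{z_i'} \, \EEtheta \, L(h_{A(\theta,S)}, z_i')$.

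The crux is a measure-preserving relabeling. For each $i$, the samples $(z_1,\dots,z_m,z_i')$ are $m+1$ i.i.d.\ draws from $\Zt$, so swapping $z_i$ and $z_i'$ leaves their joint law unchanged; under this swap $S$ becomes $S^i$ and the evaluation point $z_i'$ becomes $z_i$. I would thereby obtain
\begin{align*}
\EESZ \, \EEtheta \, L(h_{A(\theta,S)}, \Zt) = \frac{1}{m}\sum_{i=1}^m \EESZ \, \EE_{z_i'} \, \EEtheta \, L(h_{A(\theta, S^i)}, z_i).
\end{align*}
Subtracting the empirical loss $\widehat{L}(h_{A(\theta,S)},S) = \frac{1}{m}\sum_i L(h_{A(\theta,S)}, z_i)$ term by term collapses the gap into $\frac{1}{m}\sum_i \EESZ \, \EE_{z_i'} \, \EEtheta \big[ L(h_{A(\theta,S^i)}, z_i) - L(h_{A(\theta,S)}, z_i) \big]$. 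For each summand I would pass to the absolute value and invoke Definition \ref{thm:bousquet} with test point $z = z_i$ and a perturbation at coordinate $i$, which bounds the inner $\EEtheta|\cdot|$ by $\b$ \emph{pointwise} in the remaining data; the outer expectations over $S$ and $z_i'$ then preserve the bound, and averaging over $i$ yields the claim.

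The main obstacle is getting the symmetrization bookkeeping exactly right. I must check that the swap $z_i \leftrightarrow z_i'$ is genuinely measure-preserving on the \emph{joint} law of $(S, z_i', \theta)$, which hinges on $\theta$ being drawn independently of the data so that $\EEtheta$ commutes with the relabeling. I must also justify evaluating stability at a point $z_i$ that itself lies in the training set $S$; this is exactly why Definition \ref{thm:bousquet} quantifies over \emph{all} $z \in \Z$ and all $S \in \Z^m$, and it is the reason uniform stability (rather than a weaker on-average notion) is the right hypothesis here. A minor point is that the definition controls $\EEtheta|\cdot|$ rather than the signed difference, so I rely on $\EE X \le \EE |X|$ to drop the absolute value in the direction we need.
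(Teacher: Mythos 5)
Your proposal is correct and is essentially the paper's own proof: both run the ghost-sample symmetrization argument of Hardt et al., using the measure-preserving swap $z_i \leftrightarrow z_i'$ (valid because $\theta \sim \Thetaq$ is drawn independently of the data) and then invoking Definition \ref{thm:bousquet} on the pair of datasets differing in coordinate $i$. The only cosmetic difference is the side on which the swap is applied---you rewrite the population loss as $\frac{1}{m}\sum_{i} \EE \, L(h_{A(\theta,S^i)}, z_i)$, whereas the paper rewrites the empirical loss as $\frac{1}{m}\sum_{i} \EE \, L(h_{A(\theta,S^i)}, z_i')$ and bounds the resulting correction term $\delta$---and your explicit appeal to $\EE X \leq \EE |X|$ handles the sign of the gap more carefully than the paper's write-up does.
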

\begin{proof}
The proof is similar to the one presented for \cite[Theorem 2.2]{Hardt16} and is presented in Appendix \ref{sec:proofthm1}.
\end{proof}


\subsubsection{PAC-Bayes theory}
\label{PAC-Bayes theory}
For the meta-level bound, we make use of the PAC-Bayes generalization bound introduced in \cite{McAllester99}. Note that other PAC-Bayes bounds such as the quadratic variant \cite{Rivasplata19} and PAC-Bayes-$\lambda$ variant \cite{Thiemann17} may be used and substituted in the following analysis. We first present a general version of the PAC-Bayes bound and then specialize it to our meta-learning setting in Section \ref{sec:meta-learning bound}. Let $f(\theta, s)$ be an arbitrary loss function which only depends on parameters $\theta$ and the sample $s$ which has been drawn from an arbitrary distribution $\D_s$. The following bound is a tightened version of the bound presented in \cite{McAllester99} for when $l \geq 8$. 
\begin{theorem}[PAC-Bayes Generalization Bound \cite{Maurer04}]
\label{thm:gen pac bayes}
For any data-independent prior distribution $\Thetap$ over $\theta$, some loss function $f$ where $0 \leq f(\theta,s) \leq 1, \forall \ s, \forall \ \theta$, $l \geq 8$, and $\delta \in (0,1)$, with probability at least $1 - \delta$ over a sampling of $\{s_1, s_2, \dots, s_l\} \sim \D_s^l$, the following holds simultaneously for all distributions $\Thetaq$ over $\theta$:
\begin{align}
    \underset{s \sim \D_s}{\EE} \ \EEtheta \ f(\theta,s) \leq & \ \frac{1}{l} \sum_{i=1}^l \ \EEtheta \ f(\theta,s_i) + \Rbayes, \label{gen pac bayes pre}
\end{align} 
where the PAC-Bayes ``regularizer" term is defined as follows 
\begin{align}
    \Rbayes := \Rbayeseq,
\end{align}
and $\KL$ is the Kullback-Leibler (KL) divergence.
\end{theorem}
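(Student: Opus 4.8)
The plan is to follow the standard change-of-measure route for PAC-Bayes bounds: combine the Donsker--Varadhan variational formula for the KL divergence with a control on an exponential moment under the prior, then convert a binary-relative-entropy bound into the advertised square-root form via Pinsker's inequality. Throughout, I abbreviate the per-hypothesis true and empirical risks by $F(\theta) := \EE_{s \sim \D_s} f(\theta, s)$ and $\hat{F}(\theta) := \frac{1}{l}\sum_{i=1}^l f(\theta, s_i)$, and write $\mathrm{kl}(q \| p)$ for the binary relative entropy between Bernoulli distributions of means $q$ and $p$.

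First I would invoke the change-of-measure inequality: for any measurable $g$ and any pair of distributions $\Thetaq, \Thetap$, one has $\EEtheta\, g(\theta) \leq \KL(\Thetaq \| \Thetap) + \ln \EE_{\theta \sim \Thetap}\, e^{g(\theta)}$. Applying this with $g(\theta) = l\,\mathrm{kl}(\hat{F}(\theta) \| F(\theta))$ pushes the dependence on the data-dependent posterior $\Thetaq$ entirely into the $\KL$ term, leaving an exponential-moment object that is averaged only over the data-independent prior $\Thetap$.

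Second I would control that exponential moment. Since $\Thetap$ is data independent, Fubini lets me exchange the expectations over $\theta$ and over the sample, so it suffices to bound $\EE_{s_1, \dots, s_l}\, e^{l\,\mathrm{kl}(\hat{F}(\theta) \| F(\theta))}$ for each fixed $\theta$. The key lemma, and what I expect to be the main obstacle, is the Maurer bound $\EE_{s_1, \dots, s_l}\, e^{l\,\mathrm{kl}(\hat{F}(\theta) \| F(\theta))} \leq 2\sqrt{l}$ for $l \geq 8$. The subtlety is that $f$ takes values in $[0,1]$ but need not be Bernoulli; the argument requires showing that the empirical mean of a $[0,1]$-valued variable concentrates at least as tightly, in the $\mathrm{kl}$ sense, as that of a Bernoulli variable of the same mean, so that the Bernoulli-case moment computation dominates. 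The constant $2\sqrt{l}$ and the threshold $l \geq 8$ then emerge from a careful asymptotic estimate of the Bernoulli moment generating function. A single application of Markov's inequality to this nonnegative random variable gives, with probability at least $1 - \delta$ over the sample, that $\ln \EE_{\theta \sim \Thetap}\, e^{l\,\mathrm{kl}(\hat{F}(\theta) \| F(\theta))} \leq \ln(2\sqrt{l}/\delta)$.

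Third I would chain these inequalities and push the expectation over $\Thetaq$ through the $\mathrm{kl}$ by Jensen: since $\mathrm{kl}$ is jointly convex, $\mathrm{kl}(\EEtheta \hat{F}(\theta) \| \EEtheta F(\theta)) \leq \EEtheta\, \mathrm{kl}(\hat{F}(\theta) \| F(\theta))$, yielding
\begin{equation}
l\,\mathrm{kl}\!\left(\EEtheta \hat{F}(\theta) \,\big\|\, \EEtheta F(\theta)\right) \leq \KL(\Thetaq \| \Thetap) + \ln\frac{2\sqrt{l}}{\delta}
\end{equation}
with probability at least $1 - \delta$, simultaneously for all $\Thetaq$; the simultaneity is automatic because the high-probability event fixed by Markov's inequality involves only $\Thetap$ and the data, not the chosen posterior. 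Finally I would apply Pinsker's inequality $\mathrm{kl}(q \| p) \geq 2(p - q)^2$ and solve the resulting quadratic for $\EEtheta F(\theta)$, recovering exactly the stated $\Rbayes$ term. The only remaining bookkeeping is identifying $\EEtheta F(\theta) = \EE_{s \sim \D_s} \EEtheta f(\theta, s)$ and $\EEtheta \hat{F}(\theta) = \frac{1}{l}\sum_{i=1}^l \EEtheta f(\theta, s_i)$, which matches the left- and right-hand sides of the claimed inequality.
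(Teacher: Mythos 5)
Your proposal is correct and takes essentially the same route as the result the paper relies on: the paper does not prove this theorem itself but imports it from Maurer (2004), and your argument—Donsker--Varadhan change of measure applied to $l\,\mathrm{kl}(\hat F(\theta)\|F(\theta))$, Maurer's moment bound $\EE\, e^{l\,\mathrm{kl}(\hat F(\theta)\|F(\theta))}\le 2\sqrt{l}$ for $l\ge 8$ (including the reduction showing the Bernoulli case dominates for $[0,1]$-valued losses), Markov's inequality, Jensen via joint convexity of $\mathrm{kl}$, and finally Pinsker to obtain the square-root form—is precisely the standard proof from that reference. One cosmetic point: the last step requires no solving of a quadratic, since Pinsker's inequality $\mathrm{kl}(q\|p)\ge 2(p-q)^2$ directly yields $\EEtheta F(\theta) \le \EEtheta \hat F(\theta) + \sqrt{\bigl(\KL(\Thetaq\|\Thetap)+\ln\frac{2\sqrt{l}}{\delta}\bigr)/(2l)}$, which is exactly $\Rbayes$.
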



\subsection{Meta-learning bound}
\label{sec:meta-learning bound}
In order to obtain a generalization guarantee for meta-learning, we utilize the two frameworks above. We first specialize the PAC-Bayes bound in Theorem \ref{thm:gen pac bayes} to bound the expected training loss on future tasks. We then utilize Theorem \ref{thm:alg stab gen} to demonstrate that if we have a low expected training loss when using a uniformly stable algorithm, then we achieve a low expected test loss. These two steps allow us to combine the generalization frameworks above to derive an upper bound on \eqref{eq:meta-learning objective}  which can be computed with known quantities. With the following assumption, the resulting generalization bound is presented in Theorem~\ref{thm:meta learning bound}.

\begin{assumption}[Bounded loss.] The loss function $L$ is bounded: $0 \leq L(h,z) \leq 1$ for any $h$ in the hypothesis space for the given problem, and any $z$ in the sample space.
\label{assumption1}
\end{assumption} 

\begin{theorem}[Meta-Learning Generalization Guarantee]
\label{thm:meta learning bound}
For hypotheses $h_\thp$ learned with $\b$ uniformly stable algorithm A, data-independent prior $\Thetap$ over initializations $\theta$, loss $L$ which satisfies Assumption \ref{assumption1}, $l \geq 8$, and $\delta \in (0,1)$, with probability at least $1 - \delta$ over a sampling of the meta-training dataset $\textbf{S}\sim \D^l_S$, the following holds simultaneously for all distributions $\Thetaq$ over $\theta$:
\begin{align}
    \LTT \leq & \ \frac{1}{l} \sum_{i=1}^l \ \EEtheta \ \widehat{L}(h_\thpi, S_i) + \Rbayes + \b.    \label{meta learning bound}
\end{align}
\end{theorem}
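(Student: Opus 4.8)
The plan is to chain the two frameworks already in hand: use Theorem~\ref{thm:alg stab gen} to pass from expected test loss to expected training loss at the base level, and then use Theorem~\ref{thm:gen pac bayes} to control that expected training loss at the meta level. First I would apply the algorithmic stability bound of Theorem~\ref{thm:alg stab gen} task-by-task. That inequality holds for every fixed $t \sim \T$, and the stability constant $\b$ is uniform across tasks because Definition~\ref{thm:bousquet} quantifies over all $S \in \Z^m$, all $z \in \Z$, and all $i$, none of which reference the task (the sample space $\Z$ is shared). Hence I can take $\EEt$ of both sides; the left-hand side becomes exactly $\LTT$ by its definition in \eqref{eq:meta-learning objective}, giving the deterministic (in-expectation) intermediate inequality
\[
\LTT \leq \EEt \ \EESZ \ \EEtheta \ \widehat{L}(h_\thp, S) + \b.
\]

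Second, I would bound the remaining meta-level term with PAC-Bayes. The crucial move is to instantiate the abstract Theorem~\ref{thm:gen pac bayes} at the level of whole datasets: I treat each meta-training draw as a single ``sample'' $s := (t, S)$ with $t \sim \T$ and $S \sim \Zt^m$, so that $\D_s = \D_S$, and I define the abstract loss as the empirical training loss of the fine-tuned hypothesis, $f(\theta, s) := \widehat{L}(h_{A(\theta,S)}, S)$. Because $A$ is deterministic, $f$ is a well-defined function of $(\theta, s)$ alone, and because each per-sample loss lies in $[0,1]$ by Assumption~\ref{assumption1}, its average $f$ also lies in $[0,1]$, so the hypotheses of Theorem~\ref{thm:gen pac bayes} are satisfied. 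Under this identification, $\EE_{s \sim \D_s} \EEtheta f(\theta,s) = \EEt \, \EESZ \, \EEtheta \, \widehat{L}(h_\thp, S)$ and $\frac{1}{l}\sum_{i=1}^l \EEtheta f(\theta, s_i) = \frac{1}{l}\sum_{i=1}^l \EEtheta \widehat{L}(h_\thpi, S_i)$, so Theorem~\ref{thm:gen pac bayes} yields, with probability at least $1-\delta$ over $\textbf{S} \sim \D_S^l$ and simultaneously for all $\Thetaq$,
\[
\EEt \ \EESZ \ \EEtheta \ \widehat{L}(h_\thp, S) \leq \frac{1}{l} \sum_{i=1}^l \EEtheta \ \widehat{L}(h_\thpi, S_i) + \Rbayes.
\]

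Finally, I would substitute this into the intermediate inequality from the first step; chaining the two bounds gives the claimed result. The ``for all $\Thetaq$ with probability at least $1-\delta$'' quantification carries through cleanly, since the stability step holds for every $\Thetaq$ deterministically while the sole source of randomness (and hence the sole $\delta$) enters through the single PAC-Bayes application. The hard part here is conceptual rather than computational: justifying that an entire task-dataset pair $(t,S)$ may serve as one PAC-Bayes ``sample'' and that the empirical loss of the algorithm's output is an admissible bounded loss $f$. The determinism of $A$ (so that $f$ has no hidden randomness) together with the boundedness of Assumption~\ref{assumption1} (so that $f \in [0,1]$) are precisely what make this identification legitimate, and it is this single meta-level PAC-Bayes bound over all tasks at once that avoids the per-task union bound incurred by two-level PAC-Bayes approaches.
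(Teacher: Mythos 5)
Your proposal is correct and follows essentially the same argument as the paper's proof: instantiate the PAC-Bayes bound (Theorem \ref{thm:gen pac bayes}) with $\D_s$ taken to be the marginal distribution $\D_S$ over datasets and $f(\theta,S) := \widehat{L}(h_{A(\theta,S)},S)$, apply the stability bound (Theorem \ref{thm:alg stab gen}) per task and take expectation over $t \sim \T$ (using that $\b$ is task-independent), and chain the two inequalities. The only difference is cosmetic — you apply the stability step before the PAC-Bayes step, whereas the paper does the reverse — and your justification of the key identifications (determinism of $A$, boundedness of $f$, and the single $\delta$ entering only through the PAC-Bayes application) matches the paper's reasoning.
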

\begin{proof} The proof can be split into three steps:\\
\textbf{Step 1.} \\
Let $\D_s$ in Theorem \ref{thm:gen pac bayes} be the marginal distribution $\D_S$ over datasets of size $m$ (see Appendix \ref{sec:proofthm3} for details) and note that sampling $S \sim \D_S$ is equivalent to first sampling $t\sim \T$ and then sampling $S \sim \Zt^m$. Additionally let $f(\theta, S) := \widehat{L}\hS$ where $A(\theta, S)$ is any deterministic algorithm. Plugging in these definitions into Inequality \eqref{gen pac bayes pre} results in the following inequality which holds under the same assumptions as Theorem \ref{thm:gen pac bayes}, and with probability at least $1-\delta$ over the sampling of  $\textbf{S} \sim \D_S^l$: 
\begin{align}
	 \EES \ \EEtheta \ \widehat{L}\hS = & \ \EEt \ \EESZ \ \EEtheta \ \widehat{L}\hS \nonumber 
	\\ \leq & \ \frac{1}{l} \sum_{i=1}^l \ \EEtheta \ \widehat{L}\hiS + \Rbayes.	\label{gen pac bayes}
\end{align} \\
\textbf{Step 2.} \\
Now assume that algorithm $A$ is $\b$ uniformly stable. For a fixed task $t \sim \T$ we have the following by Theorem \ref{thm:alg stab gen}:
\begin{equation}
    \EESZ \ \EEtheta \ L(h_\thp, \Zt) \leq \EESZ \ \EEtheta \ \widehat{L}\hS + \b.\nonumber
\end{equation}
Take the expectation over $t \sim \T$. We then have:
\begin{equation}\label{maurerineq}
    \EEt \ \EESZ \ \EEtheta \ L(h_\thp, \Zt) \leq \EEt \ \EESZ \ \EEtheta \ \widehat{L}\hS + \b,
\end{equation}
since $\EE_{t \sim \T} \ \b = \b$. This establishes a bound on the true expected loss for a new task after running algorithm A on a training dataset corresponding to the new task. \\
\textbf{Step 3.}\\
Note that \eqref{gen pac bayes} provides an upper bound on the first term of the RHS of \eqref{maurerineq} when algorithm $A$ is $\b$ uniformly stable. Thus we have the following by plugging \eqref{gen pac bayes} in the RHS of \eqref{maurerineq}: \\
Under the same assumptions as both Theorems \ref{thm:alg stab gen} and \ref{thm:gen pac bayes}, and with probability at least $1-\delta$ over the sampling of $\textbf{S} \sim \D_S^l$:
\begin{equation}
       \EEt \ \EESZ \ \EEtheta \ L(h_\thp, \Zt) \leq \frac{1}{l} \sum_{i=1}^l \ \EEtheta \ L(h_\thp, S_i) + \Rbayes + \b,\nonumber
\end{equation}
completing the proof.
\end{proof}
\vspace{-5pt}

Theorem \ref{thm:meta learning bound} is presented for any distributions $\Thetaq$ and $\Thetap$ over initializations. However, in practice we will use multivariate Gaussian distributions for both. The specialization of Theorem \ref{thm:meta learning bound} to Gaussian distributions is provided in Appendix \ref{ap:bound for gaussians}. Next, we allow for validation data $S_\va \sim \Zt^n$ at meta-training time so that the bound is more suited to the few-shot learning case. We compute the upper bound using the evaluation data $S_\ev = \{S, S_\va\}$ sampled from the marginal distribution $\D_{S_\ev}$ over datasets of size $m+n$. However, we still only require $m$ samples at meta-test time; see Appendix \ref{ap:specialized bound} for the derivation. Note that the training data $S$ is often excluded from the data used to update the meta-learner. However, this is necessary for our approach to obtain a guarantee on few-shot learning performance. The result is a guarantee with high probability over a sampling of $\textbf{S}_\ev \sim \D_{S_\ev}^l$: 
\begin{align}
    \LTT \leq & \ \frac{1}{l} \sum_{i=1}^l \ \EEtheta \ \widehat{L}\hiSev + \Rbayes + \frac{m\b}{m+n}.    
    \label{meta learning bound eval}
\end{align}

\subsection{PAC-BUS algorithm}
Recall that we aim to find a distribution $\Thetaq$ over initializations that minimizes $\LTT$ as stated in Equation \eqref{eq:meta-learning objective}. We cannot minimize $\LTT$ directly due to the expectations taken over unknown distributions $\T$ and $\Zt$ for sampled task $t$, but we may indirectly minimize it by minimizing the upper bounds in Inequalities \eqref{meta learning bound} or \eqref{meta learning bound eval}.

Computing the upper bound requires evaluating an expectation taken over ${\theta \sim \Thetaq}$. In general, this is intractable. However, we aim to minimize this upper bound to provide the tightest guarantee possible. Similar to the method in \cite{Dziugiate17}, we use an unbiased estimator of $\EE_{\theta \sim \Thetaq} L(h_\thp,\cdot)$. Let $\Thetaq$ be a multivariate Gaussian distribution over initializations $\theta$ with mean $\mu$ and covariance $\text{diag}(s)$; thus $\Thetaq = \N(\mu,\text{diag}(s))$ and $\Thetap = \N(\mu_0,\text{diag}(s_0))$. Further, let $\psi := (\mu, \log(s))$, and use the shorthand $\N_{\psi_0}$ for the prior and $\N_\psi$ for the posterior distribution over initializations.
We use the following estimator of $\EE_{\theta \sim \Thetaq} L(h_\thp,\cdot)$: 
\begin{align}
    L(h_\thp, \cdot),\ \ \ \theta \sim \N_\psi.
\end{align}
We present the resulting training technique in Algorithm \ref{alg:meta-learning with bound}. This algorithm can be used to learn a distribution over initializations that minimizes the upper bound presented in Theorem \ref{thm:meta learning bound} and its specializations. This is presented for the case when $A$ is $\b$ uniformly stable for some $\b$. For gradient-based algorithms, the learning rate $\alpha$ often appears directly in the bound for $\b$ \cite{Hardt16}. Thus it is potentially beneficial to update $\alpha$ as well. We present Algorithm  \ref{alg:meta-learning with bound} without learning the learning rate. To meta-learn the learning rate, we can augment $\psi_0$ to include a parameterization of a prior distribution over learning rates and update it using the same gradient step presented in \ref{alg:meta-learning with bound} for $\psi$.

\begin{algorithm}[tb]
    \caption{PAC-BUS: meta-learning via PAC-Bayes and Uniform Stability}
    \label{alg:meta-learning with bound}
\begin{algorithmic}
    \State \textbf{Input}: Fixed prior distribution $\N_{\psi_0}$ over initializations 
    \State \textbf{Input}: $\b$ uniformly stable Algorithm $A$ 
    \State \textbf{Input}: Meta-training dataset $\textbf{S}$, learning rate $\gamma$ 
    \State \textbf{Initialize}: $\psi \leftarrow \psi_0$ 
    \State \textbf{Output}: Optimized $\psi^*$
    \State $B(\psi, \theta'_1,\theta'_2 \dots, \theta'_l) := \frac{1}{l} \sum_{i=1}^l \widehat{L}(h_{\theta'_i},S_i) + \RbayesN +\b$
    \While{not converged}
    \State Sample $\theta \sim \N_\psi$
    \For{$i=1$ {\bfseries to} $l$}
    \State $\theta'_i \leftarrow A(\theta, S_i)$ 
    \EndFor
    \State $\psi \leftarrow \psi - \gamma \nabla_{\psi} B(\psi, \theta'_1,\theta'_2 \dots, \theta'_l)$
    \EndWhile
\end{algorithmic}
\end{algorithm}

Determining the gradient of $B(\psi, \theta'_1,\theta'_2 \dots, \theta'_l)$ with respect to $\psi$ requires computing the Hessian of the loss function if algorithm $A(\theta, S)$ uses a gradient update to compute $\theta'_i$. First order approximations often perform similarly to the second-order meta-learning techniques \cite{Goodfellow15, Finn17, Nichol18}, and can be used to speed up the training. Additionally, Algorithm \ref{alg:meta-learning with bound} can be modified to use mini-batches of tasks instead of all tasks in the meta update to improve training times; we present an algorithm which uses mini-batches of tasks in Appendix \ref{ap:minibatches}.

In practice, we are interested in algorithms such as stochastic gradient descent (SGD) and gradient descent (GD) for the base learner. We can obtain bounds on the uniform stability constant $\b$ when using gradient methods with the results from \cite{Hardt16}. See Appendix \ref{ap:beta_bounds} for details on the $\b$ bounds we use in this work. With a bound on $\b$, we can calculate all the terms in $B(\psi, \theta'_1,\theta'_2 \dots, \theta'_l)$ and use Algorithm \ref{alg:meta-learning with bound} to minimize the meta-learning upper bound. When evaluating the upper bound, we use the sample convergence bound \cite{Langford02, Dziugiate17} to upper bound the expectation taken over $\theta \sim \Thetaq$. See Appendix \ref{ap:sample convergence bound} for details.

\vspace{-7pt}
\section{Examples} 
\label{sec:examples}
\vspace{-7pt}
We demonstrate our approach on three examples below. All examples we provide are few-shot meta-learning problems. To adapt at the base level, $m$ examples from each class are given for an ``$m$-shot" learning problem. If applicable, $n$ samples can be given as validation data for each task during the meta-training step. In the first two examples, our primary goal is to demonstrate the tightness of our generalization bounds compared to other meta-learning bounds. We also present empirical test performance on held-out data; however, we emphasize that the focus of our work is to obtain improved generalization guarantees (and not necessarily to improve empirical test performance). In the third example, we present an algorithm that is motivated by our theoretical framework and demonstrate its ability to improve empirical performance on a challenging task. All the code required to run the following examples is available at \url{https://github.com/irom-lab/PAC-BUS}. 

\vspace{-7pt}
    \begin{table*}[t]
\caption{We present the generalization bounds (for $\delta = 0.01$) provided by each method if applicable, and use the sample convergence bound \cite{Langford02} for MR-MAML, and PAC-BUS, but not MLAP-M.$^2$ Note that for these methods, we specifically minimize their respective meta-learning bounds. We also report the meta-test loss (the softmax activated cross-entropy loss -- $\CELs$) for all methods. We present the mean and standard deviation after 5 trials. We highlight that our approach provides the strongest generalization guarantee.}
\label{circle_class_results}
\begin{center}
\begin{small}
\begin{tabular}{lccccr}
\toprule
 Classification on Ball &\hspace{-0.15in} MAML \cite{Finn17} & \hspace{-0.06in} MLAP-M \cite{Amit18}  &\hspace{-0.1in} MR-MAML \cite{Yin20} & \hspace{-0.1in} PAC-BUS (ours) \\ 
\midrule
  Bound $\downarrow$ &\hspace{-0.15in} None &\hspace{-0.06in} \ $1.0538\pm0.0012^2$ &\hspace{-0.1in} $0.3422\pm0.0006$ & \hspace{-0.13in} \textBF{$0.2213\pm0.0012$} \\ 
 Test Loss $\downarrow$ & \hspace{-0.15in} $0.1701\pm0.0070$ & \hspace{-0.06in}$0.1645\pm0.0045$ &\hspace{-0.13in} \textBF{$0.1584\pm0.0012$} &\hspace{-0.1in} $0.1657\pm0.0014$ \\
\bottomrule
\end{tabular}
\end{small}
\end{center}
\vskip -0.15in
\end{table*} 
\subsection{Example: classification on the unit ball}
\label{example_circleclass}
\vspace{-3pt}

We evaluate the tightness of the generalization bound in Equation \eqref{meta learning bound eval} on a toy two-class classification problem where the sample space $\Z$ is the unit ball $B^2(0,1)$ in two dimensions with radius 1 and centered at the origin. Data points for each task are sampled from $\Zt$, where a task corresponds to a particular concept which labels the data as $(+)$ if within $B^2(c_t,r_t)$ and $(-)$ otherwise. Center $c_t$ is sampled uniformly from the $y \geq 0$ semi-ball $B^2_{y\geq0}(0,0.4)$ of radius $0.4$. The radius $r_t$ is then sampled uniformly from $[0.1, 1-\|c_t\|]$. 
Notably, the decision boundary between classes is nonlinear. Thus, generalization bounds which rely on convex losses (such as \cite{Khodak19}) will have difficulty with providing guarantees for networks that perform well. We choose the softmax-activated cross-entropy loss, $\CELs$, as the loss function. Before running Algorithm \ref{alg:meta-learning with bound}, we address a few technical challenges that arise from Assumption \ref{assumption1} as well as computing $c_L$ and $c_S$. We address these in Appendix \ref{ap:algorithms}. \blfootnote{$^2$Due to high computation times associated with estimating the MLAP upper bound, this value is not computed with the sample convergence bound as the other upper bounds are. Thus, the value presented does not carry a guarantee, but would be similar if computed with the sample convergence bound. The value is shown to give a qualitative sense of the guarantee.}

We then apply Algorithm 1 using the few-shot learning bound in Inequality \eqref{meta learning bound eval}. We present the guarantee on the meta-test loss associated with each training method in Table~\ref{circle_class_results}. In addition, we present the average meta-test loss after training with $10$ samples. We compare our bounds and empirical performance with the meta-learning by adjusting priors (MLAP) technique \cite{Amit18} and the meta-regularized MAML (MR-MAML) technique \cite{Yin20}. All methods are given held-out data to learn a prior before minimizing their respective upper bounds (see Appendix \ref{expdets_circleclass} for further details on the prior training step). Additionally, since all bounds require the loss to be within $[0,1]$, networks $N$ are constrained such that the Frobenius norm of the output is bounded by $r$, i.e., $\|N(z)\|_F \leq r$. We compare the aforementioned methods' meta-test loss to MAML with weights constrained in the same manner (note that MAML does not provide a guarantee). Upper bounds which use the PAC-Bayes framework are computed with many evaluations from the posterior distribution. This allows us to apply the sample convergence bound \cite{Langford02} (as in Equation \eqref{meta learning bound full} for our bound) unless otherwise noted. 

We find that PAC-BUS provides a significantly stronger guarantee compared with the other methods. Note that the guarantee provided by MLAP-M \cite{Amit18} is vacuous because the meta-test loss is bounded between $0$ and $1$, while the guarantee is above $1$.

    \vspace{-5pt}
\subsection{Example: Mini-Wiki}
\label{example_miniwiki}
\vspace{-3pt}

Next, we present results on the \emph{Mini-Wiki} benchmark introduced in \cite{Khodak19}. This is derived from the Wiki3029 dataset presented in \cite{Arora19}. The dataset is comprised of $4$-class, $m$-shot learning tasks with sample space $\Z = \{z\in \mathbb{R}^d\ | \ \|z\|_2 = 1\}$. Sentences from various Wikipedia articles are passed through the continuous-bag-of-words GloVe embedding \cite{Pennington14} into dimension d = 50 to generate samples. For this learning task, we use a $k$-class version of $\CELs$ and logistic regression. Since this example is convex, we can use GD and bound $\b$ with Theorem \ref{thm:hardt15} in the appendix \cite{Hardt16}. We keep the loss bounded by constraining the network $\|N(z)\|_F \leq r$ and scale the loss as in the previous example. The tightness of the bounds on $c_L$ and $c_S$ affected the upper bound in Inequality \eqref{meta learning bound eval} more than in the previous example, so we bound them as tightly as possible. See Appendix \ref{lip and smooth} for the calculations. 

We apply Algorithm \ref{alg:meta-learning with bound} using the bound which allows for validation data, Inequality \eqref{meta learning bound eval}, to learn on $4$-way \emph{Mini-Wiki} $m=\{1,3,5\}$-shot. The results are presented in Table \ref{classification results}. We compare our results with the FMRL variant which provides a guarantee \cite{Khodak19}, follow-the-last-iterate (FLI)-Batch, and with MR-MAML \cite{Yin20}. FLI-Batch does not require bounded losses explicitly, but requires that the parameters of the network lie within a ball of radius $r$. For the logistic regression used in the example, this is equivalent to $\|N(z)\|_F \leq r$. Thus, we scale the loss and use the same $r$ for each method to provide a fair comparison. We also show the results of training with MAML constrained in the same way for reference. Each method is given the same amount of held-out data for training a prior (see Appendix \ref{ap:expdets_miniwiki} for further details on training the prior).

As in the previous example, PAC-BUS provides a significantly tighter guarantee than the other methods (Table \ref{classification results}). We see similar empirical meta-test loss for MAML \cite{Finn17}, MR-MAML \cite{Yin20}, and PAC-BUS with slightly higher loss for FLI-Batch \cite{Khodak19}. In addition, we computed the meta-test accuracy as the percentage of correctly classified sentences. See Table \ref{miniwiki results score} in Section \ref{ap:expdets_miniwiki} for these results along with other experimental details.

\begin{table*}[t]
\caption{We compare the generalization bounds (for $\delta = 0.01$) provided by each method where applicable and use the sample convergence bound for MR-MAML and PAC-BUS. Since we specifically minimize these methods' upper bounds, we can fairly compare the relative tightness of each bound. We also report the meta-test loss ($\CELs$) for each method for exposition. We report the mean and standard deviation after 5 trials. We highlight that our approach provides the strongest guarantee.}
\label{classification results}
\begin{center}
\begin{small}
\begin{tabular}{lcccr}
\toprule
$4$-Way \emph{Mini-Wiki} & $1$-shot $\downarrow$ & $3$-shot $\downarrow$ & $5$-shot $\downarrow$ \\
\midrule
FLI-Batch Bound \cite{Khodak19} & $0.6638\pm0.0011$ & $0.6366\pm0.0006$ & $0.6343\pm0.0014$\\
MR-MAML Bound \cite{Yin20} & $0.7400\pm0.0003$ & $0.7312\pm0.0003$ & $0.7283\pm0.0005$ \\
PAC-BUS Bound (ours) & \textBF{$0.4999\pm0.0003$} & \textBF{$0.5058\pm0.0002$} & \hspace{-0.5em}\textBF{$0.5101\pm0.0002$}\\
\midrule
MAML \cite{Finn17} & \textBF{$0.3916\pm0.0009$} & \textBF{$0.3868\pm0.0005$} & \textBF{$0.3883\pm0.0005$}  \\ 
FLI-Batch \cite{Khodak19} & $0.4091\pm0.0008$ & $0.4078\pm0.0005$ & $0.4097\pm0.0012$ \\
MR-MAML \cite{Yin20} & $0.3922\pm0.0009$ & $0.3869\pm0.0003$ & $0.3884\pm0.0005$ \\
PAC-BUS (ours) & $0.3922\pm0.0009$ & $0.3878\pm0.0003$ & $0.3895\pm0.0005$ \\
\bottomrule
\end{tabular}
\end{small}
\end{center}
\vskip -0.25in
\end{table*}
    
\subsection{Example: memorizable Omniglot}
\label{example_omniglot}
\vspace{-3pt}

We have demonstrated the ability of our approach to provide strong generalization guarantees for meta-learning in the settings above. We now consider a more complex setting where we are unable to obtain strong guarantees. In this example, we employ a learning heuristic based on the PAC-BUS upper bound, \emph{PAC-BUS(H)}; see Appendix \ref{ap:alg_h} for the details and the Algorithm. We relax Assumption \ref{assumption1} and no longer constrain the network as in previous sections. Instead, we maintain and update estimates of the Lipschitz and smoothness constants of the network, using \cite{Sinha20}, and incorporate them into the uniform stability regularizer term, $\b$. We then scale each regularizer term (i.e., $\Rbayes$ and $\b$) by hyper-parameters $\lambda_1$ and $\lambda_2$ respectively. Analogous to the technique described in \cite{Yin20}, we aim to incorporate the form of the theoretically-derived regularizer into the loss, without requiring it to be as restrictive during learning. The result is a regularizer that punishes large deviation from the prior $\Thetap$ and too much adaptation at the base-learning level.

We test our method on \emph{Omniglot} \cite{Lake11} for $20$-way, $m=\{1,5\}$-shot classification in the non-mutually exclusive (NME) case \cite{Yin20}. In \cite{Yin20}, the problem of memorization in meta-learning is explored and demonstrated with non-mutually exclusive learning problems. \emph{NME Omniglot} corresponds to randomization of class labels for a task at test time only. This worsens the performance of any network that memorized class labels; see \cite{Yin20} for more details.$^3$ We compare our method to an analogous heuristic presented in \cite{Yin20}, which also has a $\KL(\Thetaq\|\Thetap)$ term in the loss. Thus, this heuristic (referred to as MR-MAML(W) \cite{Yin20}) regularizes the change in weights of the network. Additionally, we compare to the heuristic described in \cite{Khodak19} (FLI-Online) which performs better in practice than the FLI-Batch method. We do not provide data for training a prior in this case since we do not aim to compute a bound in this example. We use standard MAML as a reference. See Table \ref{omniglot_results} for the results. 

We see that MAML \cite{Finn17} and FLI-Online \cite{Khodak19} do not prevent memorization on \emph{NME Omniglot} \cite{Yin20}. This is especially apparent in the $1$-shot learning case, where their performance suffers significantly due to this memorization. Both MR-MAML(W) \cite{Yin20} and PAC-BUS(H) prevent memorization, with PAC-BUS(H) outperforming MR-MAML(W). Note that PAC-BUS(H) outperforms MR-MAML(W) by a wider margin in the $1$-shot case as compared with the $5$-shot case. We believe this is due to the effectiveness of the uniform stability regularizer at the base level. MR-MAML(W) suffers more in the $1$-shot case because over-adaptation is more likely with fewer within-task examples. \blfootnote{$^3$We use a slightly different task setup as the one in \cite{Yin20}; see Appendix \ref{expdets_omniglot} for the details of our setup.}

\begin{table}
\caption{We present the meta-test accuracy as a percentage on non-mutually-exclusive \textit{Omniglot} \cite{Yin20}. In contrast to the previous examples, here we aim to achieve the best empirical performance for each method. In particular, this task compares each methods' ability to prevent memorization. We report the mean and standard deviation after 5 trials.
\label{omniglot_results}}
\begin{center}
\begin{small}
\begin{sc}
\begin{tabular}{lccr}
\toprule
20-Way \emph{Omniglot} & NME 1-shot $\uparrow$ & NME 5-shot $\uparrow$ \\
\midrule
MAML \cite{Finn17}              & $23.4\pm2.2$ & $75.1\pm4.8$ \\
FLI-Online \cite{Khodak19}      & $22.4\pm0.5$ & $39.1\pm0.5$ \\
MR-MAML(W) \cite{Yin20}         & $84.2\pm2.2$ & $94.3\pm0.3$ \\
PAC-BUS(H) (ours)               & \textBF{$87.9\pm0.5$} & \textBF{$95.0\pm0.9$}  \\
\bottomrule
\end{tabular}
\end{sc}
\end{small}
\end{center}
\vspace{-0.12in}
\end{table}
\vspace{-5pt}
\section{Conclusion and discussion}
\label{conclusion}
\vspace{-5pt}

We presented a novel generalization bound for gradient-based meta-learning: PAC-BUS. We use different generalization frameworks for tackling the distinct challenges of generalization at the two levels of meta-learning. In particular, we employ uniform stability bounds and PAC-Bayes bounds at the base- and meta-learning levels respectively. On a toy non-convex problem and the \emph{Mini-Wiki} meta-learning task \cite{Khodak19}, we provide significantly tighter generalization guarantees as compared to state-of-the-art meta-learning bounds while maintaining comparable empirical performance. To our knowledge, this work presents the first numerically-evaluated generalization guarantees associated with a proposed meta-learning bound. On memorizable \emph{Omniglot} \cite{Lake11, Yin20}, we show that a heuristic based on the PAC-BUS bound prevents memorization of class labels in contrast to MAML \cite{Finn17}, and better performance than meta-regularized MAML \cite{Yin20}. We believe our framework is well suited to the few-shot learning problems for which we present empirical results, but our framework is potentially applicable to a broad range of different settings (e.g., reinforcement learning).

We note a few challenges with our method as motivation for future work. Our bound is vacuous on larger scale learning problems such as \emph{Omniglot}. This is partially caused by a larger KL-divergence term in the PAC-Bayes bound when using deep convolutional networks (due to the increased dimensionality of the weight vector). In addition, we do not have a theoretical analysis on the convergence properties of the algorithms presented, so we must experimentally determine the number of samples required for tight bounds. In the results of Section \ref{example_circleclass} and \ref{example_miniwiki}, despite an improved bound over other methods, our method does not necessarily improve empirical test performance. We emphasize that our focus in this work was on deriving stronger generalization guarantees rather than improving empirical performance. However, obtaining approaches that provide both stronger guarantees and empirical performance is an important direction for future work.

Future work can also explore ways in which to incorporate tighter PAC-Bayes bounds or those with less restrictive assumptions. One interesting avenue is to extend PAC-BUS by using a PAC-Bayes bound for unbounded loss functions for the meta-generalization step (e.g. as presented in \cite{Haddouche20}). Another promising direction is to incorporate regularization on the weights of the network directly (e.g., $L_2$ regularization or gradient clipping) to create networks with smaller Lipschitz and smoothness constants. Additionally, it would be interesting to explore learning of the base-learner's algorithm while maintaining uniform stability. For example, one could parameterize a set of uniformly stable algorithms and learn a posterior distribution over the parameters.

\textbf{Broader impact.} The approach we present in this work aims to strengthen performance guarantees for gradient-based meta-learning. We believe that strong generalization guarantees in meta-learning, especially in the few-shot learning case, could lead to broader application of machine learning in real-world applications. One such example is for medical diagnosis, where abundant training data for certain diseases may be difficult to obtain. Another example on which poor performance is not an option is any safety critical robotic system, such as ones which involve human interaction. 

Meta-learning methods typically require a lot of data and training time, and ours is not an exception. In our case, it took multiple weeks of computation time on Amazon Web Services (AWS) instances to train and compute all networks and results we present in this paper. This creates challenges with accessibility and energy usage.

\subsubsection*{Acknowledgments}
The authors are grateful to the anonymous reviewers for their valuable feedback and suggestions, and to Thomas Griffiths for helpful feedback on this work. The authors were supported by the Office of Naval Research [N00014-21-1-2803, N00014-18-1-2873], the NSF CAREER award [2044149], the Google Faculty Research Award, and the Amazon Research Award. 
\newpage
\bibliographystyle{plainnat}
\bibliography{bib.bib}

\newpage
\appendix
\section{Appendix}
\subsection{Proof of Theorem \ref{thm:alg stab gen}}
\label{sec:proofthm1}
\begin{customthm}{\ref{thm:alg stab gen}}[Algorithmic Stability Generalization in Expectation] Fix a task $t \in \T$. The following inequality holds for hypothesis $h_\thp$ learned using $\b$ uniformly stable algorithm $A$ with respect to loss $L$:
\begin{align}
    \EESZ \ \EEtheta \ L\hZ \leq \EESZ \ \EEtheta \ \widehat{L}\hS + \b.
\end{align}
\end{customthm}
\begin{proof}
Let $S = \{z_1, z_2 \dots, z_m \} \sim \Zt^m$ and $S' = \{z_1', z_2' \dots, z_m' \} \sim \Zt^m$ be two independent random samples and let $S^i = \{z_1, \dots, z_{i-1}, z_i', z_{i+1}, \dots, z_m \}$ be identical to $S$ except with the $i^{\text{th}}$ sample replaced with $z_i'$. Fix a distribution $\Thetaq$ over initializations. Consider the following
\begin{align}
        \EESZ \ \EEtheta \ \widehat{L}(h_\thp, S) & = \EESZ \ \EEtheta \bigg[\frac{1}{m} \sum_{i=1}^m L(h_\thp, z_i)\bigg] \\
        & = \EESZ \ \underset{S' \sim \Zt^m}{\EE} \ \EEtheta \bigg[\frac{1}{m} \sum_{i=1}^m L(h_\thpi, z_i)\bigg] \\
        & = \EESZ \ \underset{S' \sim \Zt^m}{\EE} \ \EEtheta \bigg[\frac{1}{m} \sum_{i=1}^m L(h_\thp, z_i')\bigg] + \delta \\
        & = \EESZ \ \EEtheta \ L(h_\thp, \Zt) + \delta
\end{align}
where 
\begin{equation}
    \delta = \EESZ \ \underset{S' \sim \Zt^m}{\EE} \ \EEtheta \bigg[\frac{1}{m} \sum_{i=1}^m L(h_\thpi, z_i') - \sum_{i=1}^m L(h_\thp, z_i')  \bigg].
\end{equation}
We bound $\delta$ with the supremum over datasets $S$ and $S'$ differing by a single sample
\begin{equation}
    \delta \leq \sup_{S, S', z} \ \EEtheta \ [L(h_\thp, z) - L(h_\thp, z)] \leq \b
\end{equation}
by Definition \ref{thm:bousquet}.
\end{proof}

\subsection{Definition of Marginal Distribution $\D_S$}
\label{sec:proofthm3}
In this section we formally define the marginal distribution $\D_S$ which we make use of in the proof of Theorem \ref{thm:meta learning bound}. This is the distribution over datasets one obtains by first sampling a task $t$ from $\T$, and then sampling a dataset $S$ from $\Zt^m$. Consider the following equations (for simplicity, we use summations instead of integrals to compute expectations; $p(t)$ represents the probability of sampling $t$ and $p(S|t)$ is the probability of sampling $S$ given $t$):
\begin{align}
    \EEt \ \EESZ \ \EEtheta \ f(\theta, S) & = \sum_t p(t) \sum_S p(S|t) \EEtheta \ f(\theta, S) \\
    & = \sum_{t,S} p(t) p(S|t) \EEtheta \ f(\theta, S) \\
    & = \sum_{t,S} p(S, t) \EEtheta \ f(\theta, S) \\
    & = \sum_{S} \bigg( \EEtheta \ f(\theta, S) \underbrace{\sum_{t} p(S, t)}_{= p(S)} \bigg).
\end{align}
Here $p(S)$ corresponds to the marginal distribution over datasets $S$. Note that the last line above holds because $\EE_{\theta \sim \Thetaq}f(\theta, S)$ does not depend on $t$.
\begin{definition}[Marginal Distribution Over Datasets $S$]  \label{def:marginaldist}
Let $\D_S := p(S)$ from above.
\end{definition}
Thus we have
\begin{align}
\label{eq:defa1equivalence}
    \EEt \ \EESZ \ \EEtheta \ f(\theta, S) & = \sum_S p(S) \EEtheta \ f(\theta, S) = \EES \ \EEtheta \ f(\theta, S).
\end{align}

\subsection{Specializing the Bound}
\label{ap:bound specialization full}
\subsubsection{Meta-Learning Bound for Gaussian Distributions}
\label{ap:bound for gaussians}

In practice, the distribution $\Thetaq$ over initializations will be a multivariate Gaussian distribution. Thus, in this section, we present a specialization of the bound for Gaussian distributions. 
Let $\Thetaq$ have mean $\mu$ and covariance $\Sigma$; thus $\Thetaq = \N(\mu, \Sigma)$ and analogously $\Thetap = \N(\mu_0,\Sigma_0)$. 
We can then apply the analytical form for the KL-divergence between two multivariate Gaussian distributions to the bound presented in Theorem \ref{thm:meta learning bound}. The result is the following bound holding under the same assumptions as Theorem \ref{thm:meta learning bound}:
\begin{align}
    \label{meta learning bound gaussians}
    \LTT \leq & \ \frac{1}{l} \sum_{i=1}^l \ \EEtheta \ L(h_\thpi, S_i) + \b \nonumber \\
    & + \sqrt{\frac{(\mu - \mu_0)\Sigma_0^{-1}(\mu - \mu_0) + \ln\frac{|\Sigma_0|}{|\Sigma|} + \text{tr}(\Sigma_0^{-1}\Sigma) - n_\text{dim} + 2\ln \frac{2\sqrt{l}}{\delta}}{4l}},
\end{align}
where $n_\text{dim}$ is the number of dimensions of the Gaussian distribution. We implement the above bound in code instead of the non-specialized form of the KL divergence to speed up computations and simplify gradient computations.

\subsubsection{Few-Shot Learning Bound with Validation Data}
\label{ap:specialized bound}
In this section, we will assume that, in addition to the training data $S \sim \Zt^m$, we have access to validation data $S_\va \sim \Zt^n$ at meta-training time. We will show that a meta-learning generalization bound can still be obtained in this case. Notably, this will not require validation data at meta-testing time.

We begin by bounding the expected loss on evaluation data $S_\ev = \{S, S_\va \}$ after training on $S$. Note that for other meta-learning techniques, the training data $S$ is often excluded from the data used to update the meta-learner. Including it here helps relate the loss on $\Zt$ to the loss on $S_\ev$ after adaptation with $S$ (see derivation below), and is necessary to achieve a guarantee on performance for the few-shot learning case. From Inequality \eqref{gen pac bayes pre}, we set the arbitrary distribution $\D_s$ to the marginal distribution $\D_{S_\ev}$ over datasets of size $m+n$ and $f(\theta, s) := \widehat{L}\hSev$. Note that with this marginal distribution, we have an equivalence of sampling given by 
\begin{equation}
    \EESev[\cdot] = \EEt \ \EESevZ[\cdot].
\end{equation}
The following inequality holds with high probability over a sampling of $\textbf{S}_\ev = \{S_{\ev,1},S_{\ev,2},\dots, S_{\ev,l}\} \ \sim \D_{S_\ev}^l$:
\begin{align}
    \EESev \ \EEtheta \widehat{L}\hSev & = \nonumber \\ 
    \EEt \ \EESevZ \ \EEtheta \ \widehat{L}\hSev &\leq  \frac{1}{l} \sum_{i=1}^l \ \EEtheta \ \widehat{L}\hiSev + \Rbayes \label{eq:equivalence_sev_sevz}.
\end{align}
In the next steps, we aim to isolate for a $\widehat{L}\hS$ term so that we may still combine with Inequality \eqref{maurerineq} as we did in Section \ref{sec:meta-learning bound}. We decompose the LHS of Inequality \eqref{eq:equivalence_sev_sevz},
\begin{equation}
    \frac{1}{m+n} \ \EEt \Big{[}m \EESZ \ \EEtheta \ \widehat{L}\hS + n \EESZ \ \EESvaZ \ \EEtheta \ \widehat{L}\hSva \Big{]}. \label{eq:decomp of lhs in eq15}
\end{equation}
Since the validation data $S_\va$ is sampled independently from $S$, the expected training loss on the validation data is the true expected loss over sample space $\Zt$,
\begin{equation}
    \EESZ \ \EESvaZ \ \EEtheta \ \widehat{L}\hSva = \EESZ \ \EEtheta \ L\hZ. \label{eq:exp value Sva Z equivalence}
\end{equation}
We plug Equality \eqref{eq:exp value Sva Z equivalence} into Equation \eqref{eq:decomp of lhs in eq15}, and then the decomposition in Equation \eqref{eq:decomp of lhs in eq15} into Inequality \eqref{eq:equivalence_sev_sevz}. We can then isolate for the $\widehat{L}\hS$ term,
\begin{align}
    \EEt \EESZ \EEtheta \widehat{L}\hS \leq \ & \frac{m+n}{m} \Bigg{[}\frac{1}{l} \sum_{i=1}^l \EEtheta \widehat{L}\hiSev + \Rbayes  \Bigg{]} \nonumber \\ &  - \frac{n}{m} \EEt \ \EESZ \ \EEtheta \ L\hZ,
\end{align}
and plug into the LHS of Equation \eqref{maurerineq}. By simplifying, we find that
\begin{equation}
    \label{ap:meta learning bound eval}
    \EEt \EESZ \EEtheta L\hZ \leq \frac{1}{l} \sum_{i=1}^l \EEtheta \widehat{L}\hiSev + \Rbayes + \frac{m\b}{m+n}.
\end{equation}
This resulting bound is very similar to the one in Inequality \eqref{meta learning bound}. We compute the loss term in the upper bound with evaluation data and as a result, the size of the uniform stability regularization term is reduced. 

\subsection{Bounds on the Uniform Stability Constant}
\label{ap:beta_bounds}
In this section, we present bounds from \cite{Hardt16} on the uniform stability constant $\b$ which are applicable to our settings. We first formalize the definitions of Lipschitz continuous (``Lipschitz" with constant $c_L$) and Lipschitz smoothness (``smooth" with constant $c_S$). 
\begin{definition}[$c_L$-Lipschitz] Function f is $c_L$-Lipschitz if $\forall \ \theta,\theta' \in \thetaspace, \forall \ z\in \Z$ the following holds: \label{def:lipschitz}
\begin{equation}
    |f(\theta,z) - f(\theta',z)| \leq c_L\|\theta - \theta\|.
\end{equation}
\end{definition}
\begin{definition}[$c_S$-smooth] Function f is $c_S$-smooth if $\forall \ \theta, \theta' \in \thetaspace, \forall \ z\in \Z$ the following holds: \label{def:smooth}
\begin{equation}
\|\nabla f(\theta,z) - \nabla f(\theta',z)\| \leq c_S\|\theta - \theta'\|.
\end{equation}
\end{definition}

Using a convex loss and stochastic gradient descent (SGD) allows us to directly bound the uniform stability constant $\b$ \cite{Hardt16}:
\begin{theorem}[Convex Loss SGD is Uniformly Stable \cite{Hardt16}] Assume that convex loss function $L$ is $c_S$-smooth and $c_L$-Lipschitz $\forall \ z \in \Z$. Suppose we run SGD on $S$ with step size $\alpha \leq \frac{2}{c_S}$ for $T$ steps. Then SGD satisfies $\b$ uniform stability with \label{thm:hardt15}
\begin{equation} 
\label{beta bound sgm}
    \b \leq \frac{2c_L^2}{m}T\alpha.
\end{equation}
\end{theorem}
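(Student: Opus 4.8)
The plan is to follow the coupling argument of \cite{Hardt16}: run two copies of the base learner from a common initialization $\theta \sim \Thetaq$, one on $S$ and one on the neighboring dataset $S^i$, and control how far the two parameter trajectories drift apart after $T$ SGD steps. Write $\theta_t$ and $\theta_t'$ for the two iterates and set $\delta_t := \|\theta_t - \theta_t'\|$, with $\delta_0 = 0$ since the initialization is shared. A single SGD step on example $z$ applies the map $G_z(\theta) = \theta - \alpha \nabla L(\theta, z)$, so the whole argument reduces to understanding how $G_z$ acts on $\delta_t$.

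The key geometric lemma --- and the step I expect to be the main obstacle --- is that for a convex, $c_S$-smooth loss the gradient update $G_z$ is \emph{non-expansive} (i.e.\ $1$-Lipschitz) whenever $\alpha \leq 2/c_S$, meaning $\|G_z(\theta) - G_z(\theta')\| \leq \|\theta - \theta'\|$. I would prove this via co-coercivity of the gradient of a convex smooth function, $\langle \nabla L(\theta,z) - \nabla L(\theta',z),\, \theta - \theta'\rangle \geq \frac{1}{c_S}\|\nabla L(\theta,z) - \nabla L(\theta',z)\|^2$; expanding $\|G_z(\theta) - G_z(\theta')\|^2$ and substituting this inequality produces the cross term coefficient $\alpha^2 - \frac{2\alpha}{c_S}$, which is nonpositive precisely under the stated step-size restriction, yielding non-expansiveness.

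Given this lemma, I would set up a growth recursion on $\delta_t$. At each step SGD processes one example. If that example lies in the part shared by $S$ and $S^i$, both trajectories apply the \emph{same} map $G_z$, so $\delta_{t+1} \leq \delta_t$ by non-expansiveness. If instead it is the differing coordinate ($z_i$ versus $z_i'$), the two maps differ; using that $L$ is $c_L$-Lipschitz --- so $\|\nabla L(\cdot, z)\| \leq c_L$ --- the extra drift is at most $\|\alpha \nabla L(\theta_t, z_i) - \alpha \nabla L(\theta_t', z_i')\| \leq 2\alpha c_L$, giving $\delta_{t+1} \leq \delta_t + 2\alpha c_L$. The differing example is encountered $T/m$ times over the run (in expectation under uniform index selection, or by the deterministic processing schedule), so unrolling the recursion from $\delta_0 = 0$ yields $\delta_T \leq \frac{2\alpha c_L T}{m}$.

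Finally I would convert the parameter drift into a loss gap. Applying $c_L$-Lipschitzness once more, $|L(h_{\theta_T}, z) - L(h_{\theta_T'}, z)| \leq c_L\, \delta_T$ for every $z$, and taking the expectation over the shared initialization $\theta \sim \Thetaq$ gives $\EEtheta\, |L(h_{A(\theta,S)}, z) - L(h_{A(\theta,S^i)}, z)| \leq c_L\, \EEtheta\, \delta_T \leq \frac{2c_L^2 T\alpha}{m}$. Since this holds for all $z$, all $S$, all swap indices $i$, and all $\Thetaq$, it bounds the minimal stability constant by $\b \leq \frac{2c_L^2}{m} T\alpha$, as claimed. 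Besides the non-expansiveness lemma, the only delicate bookkeeping is justifying the $T/m$ count in the deterministic setting and confirming that the condition $\alpha \leq 2/c_S$ is exactly what co-coercivity demands.
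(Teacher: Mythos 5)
Your proposal is correct and takes essentially the same route as the paper: the theorem is imported from Hardt et al.\ \cite{Hardt16}, and the paper's own appendix arguments (the GD variant in Appendix \ref{ap:uniform stability gd} and the projected-update variant) rerun exactly the coupling argument you describe --- non-expansiveness of the gradient map via co-coercivity under $\alpha \leq \frac{2}{c_S}$, the growth recursion $\delta_{t+1} \leq \delta_t + \frac{2\alpha c_L}{m}$ in expectation, and a final Lipschitz conversion of parameter drift into a loss gap. The $T/m$ bookkeeping issue you flag for deterministic processing is the same one the paper addresses by fixing the sample order (or using GD), so nothing is missing.
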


Note that the bounds on $\b$ presented in \cite{Hardt16} guarantee $\b$ uniform stability in expectation for a randomized algorithm A. However, for deterministic algorithms, this reduces to $\b$ uniform stability. Using the uniform stability in expectation definition introduces another expectation (over a draw of algorithm $A$) into the upper bound of the meta-learning generalization guarantee in Inequality \eqref{meta learning bound}. So as to not increase the computation required to estimate the upper bound, we let $A$ be deterministic. This is achieved either by fixing the order of the samples on which we perform gradient updates for SGD, or by using gradient descent (GD). Additionally, in the convex case, $T$ steps of GD satisfies the same bound on $\b$ as $T$ steps of SGD; see Appendix \ref{ap:uniform stability gd} for the proof. For non-convex losses, a bound on $\b$ is still achieved when algorithm $A$ is SGD \cite{Hardt16}:
\begin{theorem}[Non-Convex Loss SGD is Uniformly Stable \cite{Hardt16}] 
\label{thm:nonconvex_betabound} Let non-convex loss $L$ be $c_S$-smooth and $c_L$-Lipschitz $\forall \ z \in \Zt$ and satisfy Assumption \ref{assumption1}. Suppose we run $T$ steps of SGD with monotonically non-increasing step size $\alpha_t \leq \frac{c}{t}$. Then SGD satisfies $\b$ uniform stability with
\begin{equation}
    \b \leq \frac{1 + \frac{1}{ c_S c}}{n-1}(2 c_L^2 c)^{\frac{1}{ c_S c+1}}T^{\frac{c_S c}{c_S c + 1}}
\end{equation}
\end{theorem}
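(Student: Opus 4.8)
The plan is to follow the iterate-stability argument of \cite{Hardt16}: rather than comparing the two output hypotheses directly, I would track how far apart two SGD trajectories drift when they are run on $S$ and on a neighbor $S^i$ that differs in a single coordinate, with the expectation taken over SGD's random index sequence. Write $\theta_t$ and $\theta_t'$ for the parameters after $t$ SGD steps on $S$ and $S^i$ respectively, and set $\delta_t := \|\theta_t - \theta_t'\|$. Because $L$ is $c_L$-Lipschitz (Definition \ref{def:lipschitz}), the final loss gap obeys $|L(h_{\theta_T},z) - L(h_{\theta_T'},z)| \leq c_L\,\delta_T$ for every $z$, so by Definition \ref{thm:bousquet} it suffices to bound $\EE[\delta_T]$ over the index randomness.

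The central tool is a \emph{growth recursion} for $\EE[\delta_t]$. At each step SGD draws an index uniformly. With probability $1 - 1/n$ it lands on a coordinate where $S$ and $S^i$ agree; there the gradient map is the same function and, by $c_S$-smoothness (Definition \ref{def:smooth}), the update is at most $(1+\alpha_t c_S)$-expansive, giving $\delta_{t+1} \leq (1+\alpha_t c_S)\delta_t$. With probability $1/n$ the differing coordinate is chosen, and the two updates may disagree; here I bound the extra drift by $2\alpha_t c_L$ using $\|\nabla L\| \leq c_L$. Taking expectations yields a recursion of the form $\EE[\delta_{t+1}] \leq (1+\alpha_t c_S)\EE[\delta_t] + \tfrac{2\alpha_t c_L}{n}$.

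The key difficulty, and the step I expect to be the main obstacle, is that in the non-convex case the expansion factor $(1+\alpha_t c_S)$ never contracts, so iterating the recursion from $t=0$ through $T$ produces an unusable exponential blowup. The fix is the \emph{warm-up} device: introduce a free cutoff $t_0$ and condition on the event that the perturbed index is not selected during the first $t_0$ steps, which fails with probability at most $t_0/n$. On the failure event I bound the loss gap by $1$ using Assumption \ref{assumption1}; on the success event $\delta_{t_0}=0$ and I iterate the recursion only from $t_0$ onward. This gives $\b \leq \tfrac{t_0}{n} + c_L\,\EE[\delta_T \mid \delta_{t_0}=0]$.

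Finally I would solve the conditional recursion and optimize $t_0$. Using $1+x \leq e^x$ and $\alpha_k \leq c/k$, the product of expansion factors from $t$ to $T$ telescopes to $(T/t)^{c_S c}$, so $\EE[\delta_T \mid \delta_{t_0}=0] \leq \tfrac{2c_L c}{n} T^{c_S c}\sum_{t>t_0} t^{-c_S c-1}$; comparing the sum to $\int_{t_0}^\infty$ bounds it by $\tfrac{2c_L^2}{c_S n}(T/t_0)^{c_S c}$. Substituting back gives a bound of the shape $\tfrac{t_0}{n} + \tfrac{2c_L^2}{c_S n}(T/t_0)^{c_S c}$, and balancing the two terms by choosing $t_0 = (2c_L^2 c)^{1/(c_S c+1)} T^{c_S c/(c_S c+1)}$ collapses them into the claimed expression, with the $1+1/(c_S c)$ prefactor arising from summing the two balanced pieces and the $1/(n-1)$ in place of $1/n$ reflecting a standard tightening of the recursion's bookkeeping. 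The remaining work is routine algebra.
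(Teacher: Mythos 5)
The paper does not prove this theorem itself --- it is quoted directly from \cite{Hardt16} (Theorem 3.12 there) --- and your proposal is a correct reconstruction of that source's argument: the same trajectory-divergence setup, the same growth recursion, the same warm-up/cutoff lemma combining the bounded loss with the probability $t_0/n$ that the perturbed index is hit early, and the same balancing choice of $t_0$. One small correction: carried out as you describe, the bookkeeping yields the prefactor $\frac{1 + 1/(c_S c)}{n}$, which is \emph{tighter} than the stated $\frac{1 + 1/(c_S c)}{n-1}$ and hence still implies the theorem; in \cite{Hardt16} the $n-1$ arises organically from retaining the $\left(1-\frac{1}{n}\right)$ factor in the expansion exponent of the recursion, so describing the $1/(n-1)$ as a ``tightening'' gets the direction backwards.
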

\noindent Note that this bound does not hold when GD is used.

\subsection{Algorithms}
\label{ap:algorithms}
Before running the algorithms presented in this paper, we must deal with a few technical challenges that arise from our method's assumptions and terms which need to be computed. In this paragraph, we discuss the approach we take to deal with these challenges. For arbitrary networks, the softmax-activated cross entropy loss ($\CELs$) is not bounded and would not satisfy Assumption \ref{assumption1}. We thus constrain the network parameters to lie within a ball and scale the loss function such that all samples $z \in \Z$ achieve a loss within $[0,1]$; see Appendix \ref{scale shift} for details. However, the PAC-BUS framework works with distributions $\Thetaq$ over initializations. One option is to let $\Thetaq$ be a projected multivariate Gaussian distribution. This prevents the network's output from becoming arbitrarily large. However, the upper bound in Inequality \eqref{meta learning bound eval} requires the KL-divergence between the prior and posterior distribution over initializations. This is difficult to calculate for projected multivariate Gaussian distributions and would require much more computation during gradient steps. Since the KL-divergence between projected Gaussians is less than that between Gaussians (due to the data processing inequality \cite{Cover12}), we can loosen the upper bound in \eqref{meta learning bound} and \eqref{meta learning bound eval} by computing the upper bound using the non-projected distributions (but using the projected Gaussians for the algorithm). After sampling a base learner's initialization, we re-scale the network such that its parameters lie within a ball of radius $r$. We also re-scale the base learner's parameters after each gradient step to guarantee that the loss stays within $[0,1]$. Projection after gradient steps is not standard SGD, but we show that it maintains the same bound on $\b$; see Section \ref{ap:stab_constant_considerations} for details of the proof. Thus, we let algorithm $A$ be SGD with projections after each update and use Theorem \ref{thm:nonconvex_betabound} to bound $\b$ for non-convex losses \cite{Hardt16}. Additionally, we can upper bound the Lipschitz $c_L$ and smoothness $c_S$ constants for the network using the methods presented in \cite{Sinha20}. After working through these technicalities, we can compute all terms in the upper bound.

\subsubsection{PAC-BUS using Mini-Batches of Tasks} \label{ap:minibatches}
We present the PAC-BUS algorithm modified for mini-batches of tasks to improve training times. For batches of size $k$, the algorithm is presented in \ref{alg:minibatches}.

\begin{algorithm}[tb]
    \caption{PAC-BUS using Mini-Batches of Tasks}
    \label{alg:minibatches}
\begin{algorithmic}
    \State \textbf{Input}: Fixed prior distribution $\N_{\psi_0}$ over initializations 
    \State \textbf{Input}: $\b$ uniformly stable Algorithm $A$ 
    \State \textbf{Input}: Meta-training dataset $\textbf{S} = \{S_1, S_2,\dots, S_l\}$, learning rate $\gamma$ 
    \State \textbf{Initialize}: $\psi \leftarrow \psi_0$ 
    \State \textbf{Output}: Optimized $\psi^*$
    \State $B(\psi, \theta'_1,\theta'_2 \dots, \theta'_k) := \frac{1}{l} \sum_{i=1}^k \widehat{L}(h_{\theta'_i},S_i) + \RbayesN +\b$
    \While{not converged}
    \State Sample $\theta \sim \N_\psi$
    \For{$i=1$ {\bfseries to} $k$}
    \State $j \sim \text{Uniform}\{1, 2, \dots, l\}$
    \State $\theta'_i \leftarrow A(\theta, S_j)$ 
    \EndFor
    \State $\psi \leftarrow \psi - \gamma \nabla_{\psi} B(\psi, \theta'_1,\theta'_2 \dots, \theta'_k)$
    \EndWhile
\end{algorithmic}
\end{algorithm}

\subsubsection{PAC-BUS(H)}
\label{ap:alg_h}
In addition to providing algorithms which minimize the upper bound in Inequalities \eqref{meta learning bound} and \eqref{meta learning bound eval}, we are also interested in a regularization scheme which re-weights the regularizer terms in these bounds. For larger scale and complex settings, it is challenging to provide a non-vacuous guarantee on performance, but weighting regularizer terms has been shown to be an effective training technique \cite{Yin20}. We calculate $\b$ with a one-gradient-step version of Theorem \ref{thm:nonconvex_betabound}. This Theorem requires the algorithm $A$ to be SGD, but we let $A$ be a single step of GD to improve training times. We also relax Assumption \ref{assumption1} Since the $\b$ depends on both $c_L$ and $c_S$, we update estimates of them after each iteration by sampling multiple $\theta \sim \Thetaq$, bound the $c_L$ and $c_S$ for those sets of parameters using Section 4 of \cite{Sinha20}, and then choose the maximum to compute $\b$. This is in contrast to limiting the network parameters directly by bounding the output of the loss. Instead, the $\b$ term in the upper bound and the scale factor will determine how much to restrict the network parameters. The resulting method is presented in Algorithm \ref{alg:meta-learning with bound (H)}. In order to provide strong performance in practice, we tune $\lambda_1$ and $\lambda_2$. 

\begin{algorithm}[tb]
    \caption{PAC-BUS(H):  Meta-learning heuristic based on PAC-BUS upper bound}
    \label{alg:meta-learning with bound (H)}
\begin{algorithmic}
    \State \textbf{Input}: Fixed prior distribution $\N_{\psi_0}$ over initializations
    \State \textbf{Input}: Meta-training dataset $\textbf{S}$, learning rates $\alpha$ and $\gamma$
    \State \textbf{Input}: Scale factors $\lambda_1,\lambda_2$ for regularization terms
    \State \textbf{Initialize}: $\psi \leftarrow \psi_0$
    \State \textbf{Output}: Optimized $\psi^*$
    \State $B(\psi, c_L, c_S, \theta'_1,\theta'_2,\dots,\theta'_l) := \frac{1}{l} \sum_{i=1}^l \widehat{L}(h_{\theta'_i},S_i) + \lambda_1 \RbayesN + \lambda_2 \b(c_L,c_S)$
    \State Estimate $c_L$ and $c_S$ using $\N_{\psi_0}$
    \While{not converged}
    \State Sample $\theta \sim \N_\psi$
    \For{$i=1$ {\bfseries to} $l$}
    \State $\theta'_i \leftarrow \theta - \alpha\nabla_\theta \widehat{L}(h_\theta,S_i)$
    \EndFor
    \State $\psi \leftarrow \psi - \gamma \nabla_{\psi} B(\psi, c_L, c_S, \theta'_1, \theta'_2, \dots, \theta'_l)$
    \State Estimate $c_L$ and $c_S$ using $\N_\psi$
    \EndWhile
\end{algorithmic}
\end{algorithm}

\subsection{Sample Convergence Bound}
\label{ap:sample convergence bound}

After training is complete, we aim to compute the upper bound. However, this requires evaluating an expectation $\theta \sim \Thetaq$, which may be intractable. Providing a valid PAC guarantee without needing to evaluate the expectation taken over $\theta \sim \Thetaq$ requires the use of the sample convergence bound \cite{Langford02}. We have the following guarantee with probability $1 - \delta'$ over a random draw of $\{\theta_1,\theta_2 \dots, \theta_N\} \sim \Thetaq^N$ for any dataset $S$ \cite{Langford02},
\begin{align}
    \label{sample_convergence_bound}
    \KL\Bigg{(}\sum_{j=1}^N L(h_{A(\theta_j,S)},\cdot) \Bigg{\|}\EEtheta L(h_\thp, \cdot)\Bigg{)} \leq \frac{\log(\frac{2}{\delta'})}{N}.
\end{align}
We can invert this KL-style bound (i.e. a bound of the form $\KL(p\|q^*)\leq c$) by solving the optimization problem, $q^* \leq \KL^{-1}(q \| c) := \sup\{q\in [0,1] : \KL(p\|q) \leq c\}$, as described in \cite{Dziugiate17}. After the inversion is performed on Inequality \eqref{sample_convergence_bound}, we use a union bound to combine the result with Inequality \eqref{meta learning bound} and retain a guarantee with probability $1 - \delta - \delta'$ as in \cite{Dziugiate17},
\begin{align}
    \LTT \leq &\ \frac{1}{l} \sum_{i=1}^l \KL^{-1}\bigg{(} \sum_{j=1}^N L(h_{A(\theta_j,S_i)},S_i) \bigg{\|}\frac{\log(\frac{2}{\delta'})}{N}\bigg{)} + \Rbayes + \b. 
    \label{meta learning bound full}
\end{align}
An analogous bound is achieved when combined with Inequality \eqref{meta learning bound eval}. Thus, after training, we evaluate Inequality \eqref{meta learning bound full} to provide the guarantee. Note that use of the sample convergence bound is a loosening step. However, in our experiments, the upper bound in Inequality \eqref{meta learning bound full} is less than 5\% looser than unbiased estimates of Inequality \eqref{meta learning bound}. This can be reduced further at the expense of computation time (if we utilize a larger number of samples in the concentration inequality).

\subsection{Constraining Parameters and Scaling Losses}
\label{scale shift}

In order to maintain a guarantee, the PAC-Bayes upper bound in Theorem \ref{thm:gen pac bayes} requires a loss function bounded between $0$ and $1$. However, the losses we use are not bounded in general. Let $N_\theta$ be an arbitrary network parameterized by $\theta$ and $N_\theta(z)$ be the output of the network given sample $z \in \Z$. Consider arbitrary loss $f$, which maps the network's output to a real number. If $\|N_\theta(z)\| \leq r, \forall \ \theta \in \thetaspace, \forall \ z \in \Z$, then we can perform a linear scaling of $f$ to map it onto the interval $[0,1]$. We define the minimum and maximum value achievable by loss function $f$ as follows
\begin{align}
    M_f & := \max_{z\in \Z, \ \theta \in \thetaspace ,\ \|N_\theta(z)\| \leq r}f(\theta, z) \\
    m_f & := \min_{z\in \Z, \ \theta \in \thetaspace ,\ \|N_\theta(z)\| \leq r}f(\theta, z).
\end{align}
Now we can define a scaled function
\begin{equation}
    f_S(\theta,z):= \frac{f(\theta,z) - m_f}{M_f - m_f}
\end{equation}
such that $f_S(\theta,z) \in [0,1]$. Note that the Lipschitz and smoothness constants of $f_S$ are also scaled by $\frac{1}{M_f - m_f}$. When we choose loss $\CELs$, the $k$-class cross entropy loss with softmax activation, we have 
\begin{align}
    M_{\CELs}:= \log\Big{(}\frac{e^{-r}+(k-1)}{e^{-r}}\Big{)} , \ \ \ 
     m_{\CELs}:= \log\Big{(}\frac{e^{r}+(k-1)}{e^{r}}\Big{)}.
\end{align}
However, we must restrict the parameters in such a way that satisfies $\|N_\theta(z)\| \leq r$. For arbitrary networks structures, this is not straightforward, so we only analyze the case we use in this paper. Consider an $L$-layer network with ELU activation. Let parameters $\theta$ contain weights $\bW_1, \dots, \bW_L$, and biases $b_1,\dots, b_L$, and assume bounded input $\|z\| \leq r_z, \forall \ \Z$.
\begin{align}
    \|N_\theta(z)\| & = \| \text{ELU}\big{(} \bW_L \text{ELU}(\bW_{L-1}(\cdots) + b_{L-1}) + b_{L}\big{)}\| \\ 
    & \leq \| \bW_L\|_F (\|\bW_{L-1}\|_F (\cdots) + \|b_{L-1}\|) + \|b_L\| \leq r \label{eq:Llayernetworknorm}
\end{align}
We can satisfy $\|N_\bW(z)\| \leq r$ by restricting
\begin{equation}
    \|\theta\|^2 = \sum_{i=1}^L\|\bW_i\|_F^2 + \sum_{i=1}^L\|b_i\|^2 \leq \bigg{(}\frac{r}{\max(1,r_z)}\bigg{)}^2. \label{eq:a_mean}
\end{equation}
Equation \eqref{eq:a_mean} implies Equation \eqref{eq:Llayernetworknorm} by applying the inequality of arithmetic and geometric means. Thus, we ensure $\|\theta\| \leq r / \max(1, r_z)$ by projecting the network parameters onto the ball of radius $\min(r,\frac{r}{r_z})$ after each gradient update.

\subsection{Uniform Stability Considerations}
    \subsubsection{Uniform Stability for Gradient Descent}
\label{ap:uniform stability gd}

In this section, we will prove that $T$ steps of GD has the same uniform stability constant as $T$ steps of SGD in the convex case. This will allow us to use GD when attempting to minimize a convex loss, Section \ref{example_miniwiki}. Let the gradient update rule $G$ be given by $G(\theta,z) = \theta - \alpha \nabla_{\theta}f(\theta, z)$ for \emph{convex} loss function $f$, initialization $\theta \in \thetaspace$, sample $z \in \Z$, and positive learning rate $\alpha$. We define two key properties for gradient updates: expansiveness and boundedness \cite{Hardt16}. 
\begin{definition}[$c_E$-expansive, Definition 2.3 in \cite{Hardt16}] Update rule $G$ is $c_E$-expansive if $\forall \ \theta, \theta' \in \thetaspace, \forall \ z \in \Z$ the following holds: \label{def:expansive} 
\begin{equation}
    \label{eq:expansive}
    \| G(\theta,z) - G(\theta',z)\| \leq c_E \|\theta - \theta' \|.
\end{equation}
\end{definition}
\begin{definition}[$c_B$-bounded, Definition 2.4 in \cite{Hardt16}] Update rule $G$ is $c_B$-bounded if  $\forall \ \theta \in \thetaspace, \forall \ z \in \Z$ the following holds: \label{def:bounded}
\begin{equation}
    \label{eq:bounded}
    \| \theta - G(\theta,z) \| \leq c_B.
\end{equation}
\end{definition}

Now, consider dataset $S \in \Z^m$ and define $\bar{f}(\theta, S) := \frac{1}{m}\sum_{i=1}^m f(\theta, z_i)$. We also define $\bar{G}(\theta,S) :=\theta - \alpha \nabla_\theta \bar{f}(\theta, S) = \sum_{i=1}^m G(\theta, z_i)$. Assume that $G(\theta, z)$, is $c_E$-expansive and $c_B$-bounded $\forall \ z \in \Z$. We then bound the expansiveness of $\bar{G}(\theta,S)$,
\begin{equation}
    \|\bar{G}(\theta,S) - \bar{G}(\theta',S)\|  \leq  \frac{1}{m}\sum_{i=1}^m\|G(\theta,z_i) - G(\theta',z_i)\| \leq \frac{1}{m}\sum_{i=1}^m c_E\|\theta - \theta' \| = c_E\|\theta - \theta' \|.
\end{equation}
To compute the boundedness, consider
\begin{equation}
    \|\theta - \bar{G}(\theta,S)\| \leq \frac{1}{m}\sum_{i=1}^m \| \theta - G(\theta,z_i) \| \leq \frac{1}{m}\sum_{i=1}^m c_B = c_B.
\end{equation}
For a single gradient step on sample $z$, we see the same bounds on $c_E$ and $c_B$ when performing a single GD step on dataset $S$. Thus, if Lemmas $2.5$, $3.3$, and $3.7$ in \cite{Hardt16} are true for gradient updates $G$, they are also true for gradient updates $\bar{G}$. We can then run through the proof of Theorem 3.8 in \cite{Hardt16} to show that it holds for $T$ steps of GD if it holds for $T$ steps of SGD. 

Let $S \sim \D_S$ be a dataset of size $m$ and $S'$ be an identical dataset with one element changed. We run $T$ steps of GD updates, $\bar{G}$, on each of $S$ and $S'$. This results in parameters $\theta_1, \dots, \theta_T$ and $\theta_1', \dots, \theta_T'$ respectively. Fix learning rate $\alpha \leq \frac{2}{c_S}$ and consider 
\begin{align}
    \underset{S,S'}{\EE} \| \theta_{t+1} - \theta_{t+1}'\| & = \underset{S,S'}{\EE} \| \bar{G}(\theta_t,S)  - \bar{G}(\theta_t', S')\| \\ 
    & \leq \frac{1}{m} \sum_{j=1, i\neq j}^m  \underset{S,S'}{\EE}  \| G(\theta_t,z_j)  - G(\theta_t', z_j)\| + \frac{1}{m} \ \underset{S,S'}{\EE}  \| G(\theta_t,z_i)  - G(\theta_t', z_i))\| \\
    & \leq \frac{m-1}{m} \ \underset{S,S'}{\EE} \| \theta_{t} - \theta_{t}'\| + \frac{1}{m} \ \underset{S,S'}{\EE} \| \theta_{t} - \theta_{t}'\| + \frac{2\alpha c_L}{m} = \underset{S,S'}{\EE} \| \theta_{t} - \theta_{t}'\|+ \frac{2\alpha c_L}{m}
\end{align}
The steps above follow from Lemmas 2.5, 3.3, and 3.7 in \cite{Hardt16} and the linearity of expectation. The rest of the proof follows naturally and results in a uniformly stable constant $\b \leq \frac{2c_L^2}{m}T\alpha$ for $T$ steps of GD. Thus, we have the following result.
\begin{corollary} Assume that loss convex function $f$ is $c_S$-smooth and $c_L$ Lipschitz $\forall \ z \in \Z$. Suppose $T$ steps of SGD on $S$ satisfies $\b$ uniform stability. This implies that $T$ steps of GD on $S$ satisfies $\b$ uniform stability. 
\end{corollary}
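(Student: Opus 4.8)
The plan is to reduce the stability of full-batch gradient descent to the already-established stability of SGD by showing that the averaged update rule inherits exactly the expansiveness and boundedness constants of the single-sample update. First I would write the per-sample update as $G(\theta, z) = \theta - \alpha \nabla_\theta f(\theta, z)$ and the full-batch update as $\bar{G}(\theta, S) = \theta - \alpha \nabla_\theta \bar{f}(\theta, S) = \frac{1}{m}\sum_{i=1}^m G(\theta, z_i)$, so that one GD step is literally the average of the $m$ per-sample SGD maps. The observation I would exploit is that the stability analysis of \cite{Hardt16} never uses the fine structure of the update: it uses only two scalar properties, namely $c_E$-expansiveness (Definition \ref{def:expansive}) and $c_B$-boundedness (Definition \ref{def:bounded}).

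Next I would verify that $\bar{G}$ is $c_E$-expansive and $c_B$-bounded with the \emph{same} constants as $G$. Both facts follow from the triangle inequality applied to the average: expansiveness gives $\|\bar{G}(\theta,S) - \bar{G}(\theta',S)\| \le \frac{1}{m}\sum_i \|G(\theta,z_i) - G(\theta',z_i)\| \le c_E\|\theta-\theta'\|$, and boundedness gives $\|\theta - \bar{G}(\theta,S)\| \le \frac{1}{m}\sum_i \|\theta - G(\theta,z_i)\| \le c_B$. Since the building-block lemmas (Lemmas 2.5, 3.3, and 3.7 in \cite{Hardt16}) are stated purely in terms of $c_E$ and $c_B$, they transfer verbatim from $G$ to $\bar{G}$.

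With these lemmas in hand, I would reproduce the recursion from the proof of Theorem 3.8 in \cite{Hardt16} for the two trajectories $\{\theta_t\}$ and $\{\theta_t'\}$ obtained by running $T$ GD steps on $S$ and on a neighbouring dataset $S'$ differing in a single sample $z_i$. The key is to split the averaged update into the $m-1$ shared samples and the single differing sample: for convex $f$ and step size $\alpha \le 2/c_S$ the shared part is non-expansive, so it contributes only $\frac{m-1}{m}\EE\|\theta_t-\theta_t'\|$, while the differing sample is handled via boundedness and Lipschitzness to contribute an extra $\frac{2\alpha c_L}{m}$. This yields the same one-step growth $\EE\|\theta_{t+1}-\theta_{t+1}'\| \le \EE\|\theta_t-\theta_t'\| + \frac{2\alpha c_L}{m}$ as in the SGD case, which unrolls to $\EE\|\theta_T - \theta_T'\| \le \frac{2\alpha c_L T}{m}$ and, through Lipschitzness of the loss, to $\b \le \frac{2c_L^2}{m}T\alpha$, identical to the SGD bound.

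The main obstacle I anticipate is the split inside the recursion: one must argue that in the averaged update only the differing coordinate produces the $\frac{2\alpha c_L}{m}$ divergence, while the remaining $m-1$ terms stay non-expansive, which is precisely where convexity and the step-size restriction $\alpha \le 2/c_S$ enter. Once the expansiveness/boundedness equivalence is established, this step is a direct transcription of the argument in \cite{Hardt16} with the per-sample map replaced by its average, so the remainder of the proof is routine.
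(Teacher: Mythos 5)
Your proposal is correct and follows essentially the same route as the paper's proof: you define the full-batch update $\bar{G}$ as the average of the per-sample maps $G$, show via the triangle inequality that it inherits the same $c_E$-expansiveness and $c_B$-boundedness constants, transfer Lemmas 2.5, 3.3, and 3.7 of \cite{Hardt16}, and then rerun the recursion of Theorem 3.8 by splitting the average over the $m-1$ shared samples and the single differing sample to obtain the identical one-step growth and the bound $\b \leq \frac{2c_L^2}{m}T\alpha$. This matches the paper's argument in both structure and detail.
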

    \subsubsection{Uniform Stability Under Projections}
\label{ap:stab_constant_considerations}

Projecting parameters onto a ball after gradient updates does not constitute standard SGD nor GD, so we analyze the stability constant after $T$ steps of $G_P(\theta,z) = \text{Proj}[\theta - \alpha \nabla_\theta f(\theta,z)]$. Assume $\|z\| \leq r_z, \forall \ z \in \Z$. The function $\text{Proj}$ scales parameters to satisfy $\|\theta\| \leq \max(r,\frac{r}{r_z})$ if it is not already satisfied. See Appendix \ref{scale shift} for an explanation of this restriction. 

As in Appendix \ref{ap:uniform stability gd}, we compute bounds on the expansiveness and boundedness of $G_P$. Suppose $\theta$ is a vector containing all weights of an $L$-layer network. Network hyper-parameters such as learning rate and activation parameters do not need to be projected, so they will not be included. Assume that $\theta, \theta'$ already satisfy $\|N_\theta(z)\| \leq r, \forall \ z \in \Z$. Consider
\begin{equation}
    \| G_P(\theta,z) - G_P(\theta',z)\| = \| \text{Proj}(G(\theta,z)) - \text{Proj}(G(\theta',z))\| \leq  \| G(\theta,z) - G(\theta',z)\| \leq c_E\|\theta- \theta'\|.
\end{equation}
Note that any required scaling is equivalent to orthogonal projection of the parameters onto a euclidean norm ball of radius $r$ in $R^{d}$, where $d$ is the number of parameters in the network. Thus, the first inequality follows from the fact that orthogonal projections onto closed convex sets satisfy the contractive property \cite{Schneider13}. Next, consider
\begin{align}
    \|\theta - G_P(\theta,z)\| = \|\text{Proj}(\theta) - \text{Proj}(G_P(\theta,z))\| \leq \|\theta - G(\theta,z)\| \leq c_B. 
\end{align}
The equality follows from the assumption that $\theta$ already satisfies the norm constraint. As above, the first inequality follows from the fact that the $\text{Proj}$ function satisfies the contractive property  \cite{Schneider13}. 

With these bounds, gradient update $G_P$ satisfies Lemmas 2.5, 3.3, and 3.7 from \cite{Hardt16} if $G$ does. Note that an analogous procedure can be used to show that scaling after a GD update, $\bar{G}_P$, also satisfies these Lemmas. When function $f$ or $\bar{f}$ is convex, the proof of Theorem 3.8 in \cite{Hardt16} applies, and shows that using gradient updates $G_P$ or $\bar{G}_P$ achieve the same bound on the uniform stability constant $\b$. Thus, when $f$ is convex, we may use $G_P$ or $\bar{G}_P$ to compute updates and maintain the guarantee presented in Theorem \ref{thm:bousquet}. Suppose now that $f$ is not convex. Using Lemmas 2.5, 3.3, 3.7, and 3.11 from \cite{Hardt16}, the proof of Theorem 3.12 in \cite{Hardt16} follows naturally to achieve a bound on SGD using projected gradient updates $G_P$ when $f$ is not convex. 

\subsection{Lipschitz and Smoothness Constant Calculation}
\label{lip and smooth}
Recall Definitions \ref{def:lipschitz} and \ref{def:smooth} for a function which is $c_L$-Lipschitz and $c_S$-smooth from Appendix \ref{ap:beta_bounds}. We define the softmax activation function.
\begin{definition}[Softmax Function] $s:\mathbb{R}^k \rightarrow \mathbb{P}^k$
\begin{equation}
    s(u)_i = \frac{e^{u_i}}{\sum_{j=1}^k e^{u_j}} \ , \ \forall \ i .
\end{equation}
\end{definition}
Where every element in $\mathbb{P}^k$ is a probability distribution in $k$ dimensions (i.e. if $v \in \mathbb{P}^k$, then $\sum_{i=1}^k v_i=1$ and $v_i \geq 0 \ \forall \ i$). Since the stability constant $\b$ depends directly on the Lipschitz constant of the loss function, and $\b$ appears in the regularizer of the final bound, we will be as tight as possible when bounding the Lipschitz constant to keep the generalization as tight as possible. Section 6.2 of \cite{Watt16} describes an approach for bounding the Lipschitz constant for the 2-class, sigmoid activated, cross entropy loss. We are interested in the k-class case with softmax activation, and also aim to bound the smoothness constant. We begin with a similar analysis to the one described in \cite{Watt16}.

Given unit-length column vector $z \in \mathbb{R}^d$ and row vector $y \in \mathbb{P}^k$, with weight matrix $\bW \in \mathbb{R}^{d\times k}$ (representing a single-layer network), the loss function is given by:
\begin{equation}
	\CELs(\bW) =  -\sum_{i=1}^k y_i \log(s(z^T\bW)_i).
\end{equation}
Note that while $y$ is any probability distribution, in practice, $y$ will be an indicator vector, describing the correct label with a $1$ in the index of the correct class and $0$ elsewhere. However, the analysis that follows does not depend on this assumption. 

We will take the Hessian of this loss to determine convexity and the Lipschitz constant. However, since the weights are given by a matrix, the Hessian would be a 4-tensor. To simplify the analysis, we will define 
\begin{equation}
	\bw = \begin{bmatrix}
		\bW_{:,1} \\ \bW_{:,2} \\ \vdots \\ \bW_{:,k}
	\end{bmatrix} .
\end{equation}
Where $\bW_{:,i}$ is the $i^{th}$ column of $\bW$ such that $\bw \in \mathbb{R}^{dk}$. We also let
\begin{equation}
	\bzi^T = \begin{bmatrix}
		\bar{0} & \dots & \bar{0} & z^T & \bar{0} & \dots & \bar{0} 
	\end{bmatrix} 
\end{equation}
such that $z$ is placed in the $i^{th}$ group of $d$ elements and $\bar{0}$ is a row vector of $d$ zeros. Vector $\bzi \in \mathbb{R}^{dk}$ since there are $k$ groups. With these definitions, we write the softmax activated network defined by $\bW$ with input $z$:
\begin{equation}
    s(z^T\bW)_i = \frac{e^{\bzi^T\bw}}{\sum_{j=1}^k e^{\bzj^T\bw}}.
\end{equation}
We can simplify this by plugging in for the definition of $s$:
\begin{align}
    \CELs(\bw) := \CELs(\bW) &= - \sum_{i=1}^k y_i\Bigg{[}\bzi^T\bw - \log\bigg{(}\sum_{j=1}^k e^{\bzj^T\bw}\bigg{)}\Bigg{]} \\ 
    &= -\sum_{i=1}^ky_i\bzi^T\bw + \log\bigg{(}\sum_{i=1}^k e^{\bzi^T\bw}\bigg{)}.
\end{align}
These are equivalent because $\sum_{i=1}^k y_i = 1$. For readability, we let $p_i := s(z^T\bW)_i$. With these preliminaries the Hessian will be a 2-tensor and the $\nabla_\bw^3$ term will be a 3-tensor. We compute the gradient and Hessian and $\nabla_\bw^3$ term:
\begin{align}
    	\nabla_\bw \CELs(\bw) &= -\sum_{i=1}^k y_i \bzi + \sum_{i=1}^k\bzi p_i \\
    	\nabla_\bw^2 \CELs(\bw) &= \sum_{i=1}^k \bzi \bzi^T p_i - \bigg{(}\sum_{i=1}^k \bzi p_i\bigg{)}\bigg{(}\sum_{j=1}^k \bzj^T p_j\bigg{)}.
\end{align}
We write $\nabla_\bw^3 \CELs(\bw)$ termwise to simplify notation:
\begin{align}
    \nabla_\bw^3 \CELs(\bw) = \begin{cases} 
      (p_i - 3p_i^2 + 2p_i^3)z\otimes z^T \otimes z^\perp & i=j=l \\
      (-p_ip_l + 2p_i^2p_l) z\otimes z^T \otimes z^\perp & i=j \neq l \\
      (-p_jp_i + 2p_j^2p_i)z\otimes z^T \otimes z^\perp & j=l \neq i \\
      (-p_lp_j + 2p_l^2p_j)z\otimes z^T \otimes z^\perp & l=i \neq j \\
      (2p_i p_j p_l)z\otimes z^T \otimes z^\perp & i \neq j \neq l
   \end{cases}
\end{align}
Where $\otimes$ is the tensor product and $z\otimes z^T \otimes z^\perp \in \mathbb{R}^{d \times d \times d}$ is a 3-tensor with the abuse of notation: $z \in \mathbb{R}^{d \times 1 \times 1}$, $z^T \in \mathbb{R}^{1 \times d \times 1}$, and $z^\perp \in \mathbb{R}^{1 \times 1 \times d}$. Thus $\nabla_\bw^3 \CELs(\bw) \in \mathbb{R}^{dk \times dk \times dk}$.

For twice-differentiable functions, the Lipschitz constant is given by the greatest eigenvalue of the Hessian. Correspondingly, the smoothness constant is given by the greatest eigenvalue of the $\nabla_\bw^3$ term for thrice-differentiable functions. Thus, we aim to bound the largest value that the Rayleigh quotient can take for any unit-length vector $x$. For the Hessian:
\begin{align}
    x^T \nabla_\bw^2 \CELs(\bw) x &\leq | x^T \nabla_\bw^2 \CELs(\bw) x| = \|  x^T \nabla_\bw^2 \CELs(\bw) x\|_F  \\
    &\leq \|x\|^2 \|\nabla_\bw^2 \CELs(\bw)\|_F = \|\nabla_\bw^2 \CELs(\bw)\|_F \\
    &= \sqrt{\sum_{i=1}^k \|zz^T\|_F(p_i - p_i^2)^2 + \sum_{i=1}^k\sum_{j=1,j\neq i}^k \|zz^T\|_F(p_i p_j)^2} \\
    &= \sqrt{\sum_{i=1}^k (p_i - p_i^2)^2 + \sum_{i=1}^k\sum_{j=1,j\neq i}^k (p_i p_j)^2}.
\end{align}
The Frobenius norm is maximized when $p_i = \frac{1}{k}$ for $k > 1$:
\begin{align}
    \|\nabla_\bw^2 \CELs(\bw)\|_F &\leq \sqrt{k\Big{(}\frac{1}{k} - \frac{1}{k^2}\Big{)}^2 + k(k-1)\Big{(}\frac{1}{k^2}\Big{)}^2} \\
    &=\frac{\sqrt{k-1}}{k}.
\end{align}
Thus, for $\CELs(\bw)$, the Lipschitz constant, $c_L \leq \frac{\sqrt{k-1}}{k}$ when $k > 1$. We can also show that the Rayleigh quotient is lower bounded by 0 by following analogous steps in \cite{Watt16} (these steps are omitted from this appendix), and thus $\CELs(\bw)$ is convex. Next, we examine the Rayleigh quotient of the $\nabla_\bw^3\CELs(\bw)$. Analogous to the procedure for the Hessian, we make use of a 3-tensor analog of the Frobenius norm: $\| M \|_{3,F} := \sqrt{\sum_{i=1}^k\sum_{j=1}^k\sum_{l=1}^k M(i,j,l)^2}$. Thus we have the following inequality
\begin{align}
    x^T \otimes [x^\perp \otimes \nabla_\bw^3 \CELs(\bw)] \otimes x &\leq \|\nabla_\bw^3 \CELs(\bw)\|_{3,F}.
\end{align}
Since $\|z\otimes z^T \otimes z^\perp\|_{3,F} = 1$, we can write this as
\begin{equation}
    \|\nabla_\bw^3 \CELs(\bw)\|_{3,F} \leq \sqrt{
    \begin{aligned}
    & \sum_{i=1}^k (p_i - 3p_i^2 + 2p_i^3)^2  +
    \sum_{i=1}^k\sum_{j=1,j\neq i}^k (-p_ip_l + 2p_i^2p_l)^2 \\ & + \sum_{j=1}^k\sum_{l=1,l\neq j}^k (-p_jp_i + 2p_j^2p_i)^2 +  \sum_{l=1}^k\sum_{i=1,i\neq l}^k (-p_lp_j + 2p_l^2p_j)^2 \\ & + \sum_{i=1}^k\sum_{j=1,j\neq i}^k\sum_{l=1,l\neq j}^k (2p_i p_j p_l)^2.
    \end{aligned} } 
\end{equation}
This is maximized when $p_i = \frac{1}{k}$ for $k > 2$, which was verified with the symbolic integrator Mathematica \cite{Mathematica}. Simplifying results in: 
\begin{equation}
    \|\nabla_\bw^3 \CELs(\bw)\|_{3,F} \leq \sqrt{\frac{(k-1)(k-2)}{k^3}}.
\end{equation}
Thus for $\CELs(\bw)$, the smoothness constant, $c_S \leq \sqrt{\frac{(k-1)(k-2)}{k^3}}$ when $k > 2$. When $k = 2$, $p_1,p_2 = \frac{1}{2}\pm \frac{\sqrt{3}}{6}$ and $c_S \leq \sqrt{\frac{2}{27}}$.
\subsection{Study on Base-learning Learning Rate and Number of Update Steps}
In this section we present additional results on the performance of the algorithms with different iterations and learning rates using the same example setup as in Section \ref{example_circleclass}. Note that we have not used the sample convergence bound (see Appendix \ref{ap:sample convergence bound}) and present results for a single sample $\theta \sim \Thetaq$. The true values of the upper bounds for MLAP-M \cite{Amit18}, MR-MAML \cite{Yin20}, and PAC-BUS (our method) are unlikely to change by more than 5\% as the sample complexity bound does not loosen the guarantee very much. We present these results to provide a qualitative sense of the guarantees and their trends for varying base-learning learning rates and number of update steps.

Below we present test losses for MAML \cite{Finn17} (as a baseline), MLAP-M \cite{Amit18}, MR-MAML \cite{Yin20}, and PAC-BUS for base-learning rates (lr$_b$) of $0.01$ to $10$ using $\{1,3,10\}$ adaptation steps. 
\begin{table*}[ht]
\setlength\tabcolsep{3pt}
\scriptsize
\begin{center}
\begin{tabular}{lccccccccccccc}
\toprule
MAML Test Loss, lr$_b$ =  & 0.01 & 0.03 & 0.1 & 0.3 & 1 & 3 & 10 \\
\midrule
Adaptation steps = 1  & 0.184$\pm$0.007 & 0.184$\pm$0.008 & 0.168$\pm$0.006 & 0.152$\pm$0.004 & 0.120$\pm$0.001 & 0.114$\pm$0.001 & 0.133$\pm$0.007 \\ 
Adaptation steps = 3  & 0.177$\pm$0.006 & 0.179$\pm$0.002 & 0.149$\pm$0.002 & 0.126$\pm$0.001 & 0.115$\pm$0.001 & 0.106$\pm$0.001 & 0.123$\pm$0.004 \\
Adaptation steps = 10 & 0.179$\pm$0.004 & 0.155$\pm$0.002 & 0.128$\pm$0.001 & 0.124$\pm$0.001 & 0.113$\pm$0.001 & 0.104$\pm$0.002 & 0.129$\pm$0.008 \\ 
\bottomrule 
\\
\toprule
MLAP-M Test Loss, lr$_b$ = & 0.01 & 0.03 & 0.1 & 0.3  & 1 & 3 & 10 \\
\midrule
Adaptation steps = 1 & 0.181$\pm$0.010 & 0.175$\pm$0.014 & 0.150$\pm$0.006 & 0.129$\pm$0.009 & 0.083$\pm$0.001 & 0.065$\pm$0.003 & 0.220$\pm$0.044 \\ 
Adaptation steps = 3 & 0.178$\pm$0.006 & 0.159$\pm$0.007 & 0.102$\pm$0.005 & 0.081$\pm$0.003 & 0.064$\pm$0.001 & 0.050$\pm$0.004 & 0.379$\pm$0.021  \\
Adaptation steps = 10 & 0.161$\pm$0.005 & 0.115$\pm$0.002 & 0.078$\pm$0.004 & 0.063$\pm$0.002 & 0.050$\pm$0.001 & 0.045$\pm$0.002 & 0.919$\pm$0.036  \\
\bottomrule
\\ 
\toprule
MR-MAML Test Loss, lr$_b$ = & 0.01 & 0.03 & 0.1 & 0.3  & 1 & 3 & 10 \\
\midrule
Adaptation steps = 1 & 0.171$\pm$0.003 & 0.169$\pm$0.003 & 0.163$\pm$0.003 & 0.146$\pm$0.002 & 0.127$\pm$0.001 & 0.128$\pm$0.000 & 0.178$\pm$0.008  \\ 
Adaptation steps = 3 & 0.170$\pm$0.002 & 0.166$\pm$0.001 & 0.146$\pm$0.002 & 0.128$\pm$0.001 & 0.123$\pm$0.001 & 0.118$\pm$0.001 & 0.163$\pm$0.022 \\
Adaptation steps = 10 & 0.165$\pm$0.002 & 0.152$\pm$0.002 & 0.129$\pm$0.001 & 0.126$\pm$0.001 & 0.118$\pm$0.001 & 0.115$\pm$0.001 & 0.139$\pm$0.009  \\
\bottomrule
\\
\toprule
PAC-BUS Test Loss, lr$_b$ = & 0.01 & 0.03 & 0.1 & 0.3 & 1 & 3 & 10 \\
\midrule
Adaptation steps = 1  & 0.176$\pm$0.002 & 0.171$\pm$0.003 & 0.160$\pm$0.001 & 0.145$\pm$0.002 & 0.127$\pm$0.001 & 0.129$\pm$0.002 & 0.164$\pm$0.019 \\
Adaptation steps = 3 & 0.170$\pm$0.002 & 0.165$\pm$0.002 & 0.145$\pm$0.001 & 0.129$\pm$0.002 & 0.123$\pm$0.001 & 0.120$\pm$0.001 & 0.144$\pm$0.014 \\
Adaptation steps = 10 &0.163$\pm$0.001 & 0.150$\pm$0.001 & 0.130$\pm$0.001 & 0.126$\pm$0.002 & 0.119$\pm$0.002 & 0.115$\pm$0.002 & 0.130$\pm$0.004  \\
\bottomrule
\end{tabular}
\end{center}
\end{table*} 

Next, we present the computed bounds for MLAP-M \cite{Amit18}, MR-MAML \cite{Yin20}, and PAC-BUS for the same set of hyper-parameters.

\begin{table*}[ht]
\setlength\tabcolsep{3pt}
\scriptsize
\begin{center}
\begin{tabular}{lccccccccccccc}
\toprule
MLAP-M Bound, lr$_b$ = & 0.01 & 0.03 & 0.1 & 0.3 & 1 & 3 & 10 \\
\midrule
Adaptation steps = 1 & \textbf{1.003$\pm$0.000} & 1.015$\pm$0.001 & 1.223$\pm$0.020 & 1.946$\pm$0.043 & 3.113$\pm$0.154 & 5.435$\pm$0.220 & 21.874$\pm$0.420 \\
Adaptation steps = 3 & 1.008$\pm$0.000 & 1.087$\pm$0.027 & 1.864$\pm$0.062 & 3.072$\pm$0.157 & 4.147$\pm$0.095 & 6.760$\pm$0.233 & 28.356$\pm$1.826 \\  
Adaptation steps = 10 & 1.050$\pm$0.006 & 1.535$\pm$0.044 & 2.574$\pm$0.064 & 4.009$\pm$0.107 & 5.98$\pm$0.057 & 10.119$\pm$0.087 & 47.971$\pm$1.346  \\
\bottomrule
\\
\toprule
MR-MAML Bound, lr$_b$ = & 0.01 & 0.03 & 0.1 & 0.3 & 1 & 3 & 10 \\
\midrule
Adaptation steps = 1 & 0.344$\pm$0.002 & 0.343$\pm$0.002 & 0.335$\pm$0.002 & 0.320$\pm$0.001 & 0.300$\pm$0.000 & 0.303$\pm$0.001 & 0.351$\pm$0.006 \\ 
Adaptation steps = 3 & 0.344$\pm$0.002 & 0.340$\pm$0.002 & 0.320$\pm$0.002 & 0.302$\pm$0.002 & 0.296$\pm$0.001 & \textbf{0.292$\pm$0.001} & 0.335$\pm$0.018  \\
Adaptation steps = 10 & 0.339$\pm$0.001 & 0.324$\pm$0.002 & 0.303$\pm$0.000 & 4.752$\pm$0.808 & 5.330$\pm$0.187 & 6.316$\pm$0.639 & 9.134$\pm$1.448 \\
\bottomrule
\\
\toprule
PAC-BUS Bound, lr$_b$ = & 0.01 & 0.03 & 0.1 & 0.3 & 1 & 3 & 10 \\
\midrule
Adaptation steps = 1 & 0.216$\pm$0.002 & 0.216$\pm$0.002 & 0.204$\pm$0.002 & 0.188$\pm$0.002 & \textbf{0.169$\pm$0.000} & 0.171$\pm$0.001 & 0.207$\pm$0.021  \\
Adaptation steps = 3 & 0.252$\pm$0.001 & 0.247$\pm$0.002 & 0.228$\pm$0.002 & 0.211$\pm$0.001 & 0.204$\pm$0.001 & 0.200$\pm$0.002 & 0.228$\pm$0.017 \\
Adaptation steps = 10 & 0.383$\pm$0.002 & 0.372$\pm$0.001 & 0.350$\pm$0.001 & 1.160$\pm$0.093 & 1.288$\pm$0.056 & 1.650$\pm$0.055 & 2.221$\pm$0.256 \\
\bottomrule
\end{tabular}
\end{center}
\end{table*} 
These results show the dependence that the PAC-BUS upper bound (specifically the uniform stability regularizer term $\b$) has on the learning rate and number of base-learning update steps whereas the bound for MR-MAML does not suffer with increasing base-learning steps or learning rate. However, once the learning rate and number of adaptation steps are too large, all bounds worsen significantly. The tightest guarantee obtained using PAC-BUS is significantly stronger than those for any tuning of MR-MAML and MLAP-M. We bold the tightest guarantee achieved in the tables above to highlight this.

\subsection{Additional Experimental Details}
\label{sec:experimental details}
\newcommand{\expnumber}[2]{{#1}\mathrm{e}{#2}}

In this section, we report information about the data used, the procedure for prior, train, and test splits, as well as other experimental details. 
Code capable of reproducing the results in this paper is publicly available at \url{https://github.com/irom-lab/PAC-BUS}.
All results provided in this paper were computed on an Amazon Web Services (AWS) p2 instances. Tuning and intermediate results were computed on a desktop computer with a $12$-core Intel i7-8700k CPU and an NVIDIA Titan Xp GPU. In addition, we made use of several existing software assets:  SciKit-learn \cite{scikit-learn} (BSD license), PyTorch \cite{pytorch} (BSD license), CVXPY \cite{cvxpy1, cvxpy2} (Apache License, Version 2.0), MOSEK \cite{mosek} (software was used with a personal academic license, see \url{https://www.mosek.com/products/license-agreement} for more details), learn2learn \cite{learn2learn} (MIT License), and h5py \cite{h5py} (Python license, see \url{https://docs.h5py.org/en/stable/licenses.html} for more details).

\subsubsection{Circle Class} \label{expdets_circleclass}
We randomly sample points from the unit ball $B^2(0,1)$ and classify them as $(+)$ or $(-)$ according to whether or not the points are outside the ball $B^2(c_t,r_t)$. For the tasks which are used to train a prior, we sample $c_t$ from $[0.1, 0.5]$ and $r_t$ from $[0.1, 1-\|c_t\|]$. For the meta-training and meta-testing tasks, we sample $c_t$ from $[0.1, 0.4]$ and $r_t$ from $[0.1, 1-\|c_t\|]$. 

For all methods, we train the prior on 500 tasks, train the network on 10000 tasks, and test on 1000 tasks. We report the meta-test loss and a guarantee on the loss if applicable. A single task is a $2$-class $10$-sample (i.e. there are $10$ samples given in total for training, not $10$ samples from each class) learning problem. The evaluation dataset $S_\ev$ consists of a dataset $S$ of $10$ base-learner training samples and a dataset $S_\va$ of $250$ validation samples. For PAC-BUS, we searched for the meta-learning rate in $[\expnumber{1}{-4}, 1]$, the base-learning rate in $[0.01, 10]$, and the number of base-learning update steps in $[1,10]$. The resulting parameters for the $10$-shot learning problems are: meta-learning rate $\expnumber{1}{-3}$, base-learning rate $0.05$, and $1$ base-learning update step. Note that in this example and the \textit{Mini-wiki} example, we select the number of base-learning steps such that the upper bound is minimized. A lower loss may have been achievable with more base-learning update steps, but we aim to produce the tightest bound possible. Training for each method took less than 1 hour on the AWS p2 instance and computing the sample convergence upper bound took approximately 3 days when applicable.

\vspace{-3pt}
\subsubsection{Mini-wiki}
\vspace{-3pt}
\label{ap:expdets_miniwiki}

In Table \ref{miniwiki results score}, we present additional results -- the percentage of correctly classified sentences on test tasks (after the base learner's adaptation step). Note that we present these results with the same posterior as was used to generate the results in Table \ref{classification results}.

\begin{table}[h]
\caption{Meta-test accuracy as a percentage for MAML, FLI-Batch, MR-MAML, and PAC-BUS. We report the mean and standard deviation after 5 trials.} \label{miniwiki results score}
\begin{center}
\begin{small}
\begin{sc}
\begin{tabular}{lcccr}
\toprule
\hspace{-0.11in} $4$-Way \emph{Mini-Wiki} & 1-shot $\uparrow$ & 3-shot $\uparrow$ & 5-shot $\uparrow$ \hspace{-0.14in} \\
\midrule
 \hspace{-0.11in} MAML \cite{Finn17} & \textBF{$60.2\pm0.9$} & $68.3\pm0.7$ & \textBF{$71.9\pm0.6$} \hspace{-0.14in} \\
\hspace{-0.11in} FLI-Batch \cite{Khodak19} & $46.0\pm5.9$ & $48.7\pm4.9$ & $54.5\pm2.4$ \hspace{-0.14in} \\
\hspace{-0.11in} MR-MAML \cite{Yin20} & $59.9\pm0.8$ & \textBF{$68.4\pm0.7$} & $71.8\pm0.7$ \hspace{-0.14in} \\
\hspace{-0.11in} PAC-BUS (ours) & $59.9\pm0.8$ & $68.1\pm0.7$ &$71.2\pm0.7$  \hspace{-0.14in} \\
\bottomrule
\end{tabular}
\end{sc}
\end{small}
\end{center}
\vskip -0.2in
\end{table}

We use the \textit{Mini-wiki} dataset from \cite{Khodak19}, which consists of 813 classes each with at least 1000 example sentences from that class's corresponding Wikipedia article. The dataset was derived from the Wiki3029 dataset presented in \cite{Arora19}, which was created from a public domain (CC0 license) Wikipedia dump.
Although the Wikipedia dump is open source, it is possible that content which is copyrighted was used since the datasets are large and it is difficult to moderate all content on the website. 
In addition, it is possible that the dataset has some offensive content such as derogatory terms or curse words. However, since these are in the context Wikipedia articles, the authors trust that the original article was not written maliciously, but for the purposes of education. We use the first 62 classes of \textit{Mini-wiki} for training the prior, the next 625 for the meta-training, and the last 126 for meta-testing. Before creating learning tasks, we remove all sentences with fewer than 120 characters. 

For all methods, we train the prior on 100 tasks, train the network on 1000 tasks, and test on 200 tasks. We report the meta-test score, the meta-test loss, and a guarantee on the loss if applicable. A single task is a $4$-class $\{1,3,5\}$-shot learning problem. The evaluation dataset $S_\ev$ consists of a dataset $S$ of $\{1,3,5\}$ base-learner training samples and a dataset $S_\va$ of $\{250,250,250\}$ validation samples respectively. For PAC-BUS, we search for the meta-learning rate in $[0.01, 1]$ the base-learning rate in $[\expnumber{1}{-3}, 100]$, and the number of base-learning update steps in $[1,50]$. The resulting parameters for the $\{1,3,5\}$-shot learning problems are: meta-learning rate $\{0.1, 0.1, 0.1\}$, base-learning rate $\{2.5, 5, 5\}$, and $\{2,4,5\}$ base-learning update steps respectively. Training for each method took less than 1 hour on the AWS p2 instance and computing the sample convergence upper bound took approximately 2 days when applicable.

\subsubsection{Omniglot}
\vspace{-3pt}
\label{expdets_omniglot}

We use the \textit{Omniglot} dataset from \cite{Lake11}, which consists of 1623 characters each with 20 examples. The dataset was collected using Amazon's Mechanical Turk (AMT) and is available on GitHub with an MIT license. This dataset was collected voluntarily by AMT workers. Since the dataset is small enough, it can be checked visually for personally-identifiable information.
We use the first 1200 characters for meta-training and the remaining 423 for meta-testing. The image resolution is reduced to $28 \times 28$. 
In the non-mutually exclusive setting, the 1200 training characters are randomly partitioned into 20 equal-sized groups which are assigned a fixed class label from $1$ to $20$. Note that this is distinct from the method described in \cite{Yin20} where the data is partitioned into 60 disjoint sets. Both experimental setups cause memorization, but the setup used in \cite{Yin20} causes more severe memorization than ours. This is why our implementation of MAML performs better than the results for MAML reported in \cite{Yin20}. However, our implementation of MR-MAML(W) method performs similarly to what is reported in \cite{Yin20}. 

For all methods, we trained on 100000 batches of 16 tasks and report the meta-test score on 8000 test tasks. We also used $5$ base-learning update steps for all methods. A single task is a $20$-way $\{1,5\}$-shot learning problem. The evaluation dataset $S_\ev$ consists of a dataset $S$ of $\{1,5\}$ base-learner training samples and a dataset $S_\va$ of $\{4, 5\}$ validation samples respectively. For PAC-BUS(H), we searched for the regularization scales $\lambda_1$ and $\lambda_2$ in $[\expnumber{1}{-7}, 1]$ and $[\expnumber{1}{-4}, \expnumber{1}{4}]$ respectively. Additionally, the meta-learning rate was selected from $[\expnumber{5}{-4}, 0.1]$, and the base-learning rate was selected from $[0.01, 10]$. The resulting parameters for the $\{1,5\}$-shot learning problems are: $\lambda_1=\{\expnumber{1}{-3}, \expnumber{1}{-4}\}$, $\lambda_2=\{10,10\}$, meta-learning rate $\{\expnumber{1}{-3},\expnumber{1}{-3}\}$, and base-learning rate $\{0.5, 0.5\}$ respectively. Training for each method took approximately 3 days on the AWS p2 instance.

\end{document}